\let\cite\citep
\newcommand{\bxi}{\boldsymbol{\xi}}
\providecommand{\lin}[1]{\ensuremath{\left\langle #1 \right\rangle}}
\providecommand{\abs}[1]{\left\lvert#1\right\rvert}
\providecommand{\norm}[1]{\left\lVert#1\right\rVert}
  \providecommand{\R}{\mathbb{R}} %
  \DeclareMathOperator{\E}{{\mathbb E}}
  \providecommand{\Eb}[1]{\E \left[#1\right] }             %
  \providecommand{\EE}[2]{\E_{#1} \! #2 }      %
  \providecommand{\EEb}[2]{\E_{#1}\!\! \left[#2\right] } %
  \DeclareMathOperator*{\argmin}{arg\,min}
  \providecommand{\0}{\mathbf{0}}
  \renewcommand{\aa}{\mathbf{a}}
  \providecommand{\bb}{\mathbf{b}}
  \providecommand{\ee}{\mathbf{e}}
  \renewcommand{\gg}{\mathbf{g}}
  \providecommand{\vv}{\mathbf{v}}
  \providecommand{\xx}{\mathbf{x}}
  \providecommand{\yy}{\mathbf{y}}
  \providecommand{\cC}{\mathcal{C}}
  \providecommand{\cD}{\mathcal{D}}
  \providecommand{\cF}{\mathcal{F}}
  \providecommand{\cO}{\mathcal{O}}
  \providecommand{\cS}{\mathcal{S}}
\newtheorem{lemma}{Lemma}
\newtheorem{definition}[lemma]{Definition}
\newtheorem{remark}[lemma]{Remark}
\newtheorem{assumption}{Assumption}
\newtheorem{theorem}[lemma]{Theorem}
\newtheorem{example}{Example}  
\providecommand{\comment}[2]{\todo[inline,caption={}]{\textbf{#1: }#2}}%
\providecommand{\inlinecomment}[3]{%
  {\color{#1}#2: #3}}%
\newcommand\commenter[2]%
\newcommand\csname i#1\endcsname[1]{\inlinecomment{#2}{#1}{##1}}
\newcommand\csname #1\endcsname[1]{\comment{#1}{##1}}
\definecolor{mydarkblue}{rgb}{0,0.08,0.45}
\newenvironment{taggedassumption}[1]
 {\taggedassumptionx}
 {\endtaggedassumptionx}
\let\cite\citep
\begin{document}

\title{The Error-Feedback Framework: Better Rates for SGD\\ with Delayed Gradients and Compressed Updates}%

\author{Sebastian U. Stich\thanks{\texttt{\{sebastian.stich,sai.karimireddy\}@epfl.ch}, Machine Learning and Optimization Lab (MLO), EPFL, Switzerland.} \\ EPFL \and Sai Praneeth Karimireddy\footnotemark[1] \\ EPFL%
}

\date{}

\maketitle

\begin{abstract}%
We analyze (stochastic) gradient descent (SGD) with delayed updates on smooth quasi-convex and non-convex functions and derive concise, non-asymptotic, convergence rates. 
We show that the rate of convergence in all cases consists of two terms: (i) a stochastic term which is not affected by the delay, and (ii) a higher order deterministic term which is only linearly slowed down by the delay. Thus, in the presence of noise, the effects of the delay become negligible after a few iterations and the algorithm converges at the same optimal rate as standard SGD. This result extends a line of research that showed similar results in the asymptotic regime or for strongly-convex quadratic functions only. 

We further show similar results for SGD with more intricate form of delayed gradients---compressed gradients under error compensation and for local~SGD where multiple workers perform local steps before communicating with each other. In all of these settings, we improve upon the best known rates.

These results show that SGD is robust to compressed and/or delayed stochastic gradient updates. This is in particular important for distributed parallel implementations, where asynchronous and communication efficient methods are the key to achieve linear speedups for optimization with multiple devices.
\end{abstract}

\section{Introduction}
\label{sec:intro}
We consider the unconstrained optimization problem
\begin{align}
 f^\star := \min_{\xx \in \R^d} f(\xx)\,, \label{eq:problem}
\end{align}
for a quasi-convex (i.e.\ 1-quasar convex) or non-convex smooth function $f \colon \R^d \to \R$ and study a variety of stochastic gradient methods with delayed (or \emph{stale}) updates. Stochastic gradient descent (SGD) methods~\cite{Robbins:1951sgd} generate a sequence $\{\xx_t\}_{t \geq 0}$ of iterates for an arbitrary starting point $\xx_0 \in \R^d$ and positive stepsizes $\{\gamma_t\}_{t \geq 0}$, by sequential updates of the form
\begin{align}
 \xx_{t+1} &= \xx_t - \gamma_t \gg_t\,,
 & &\text{where}
 & \gg_t &= \nabla f(\xx_t) + \bxi_t\,, \tag{SGD} \label{eq:syncSGD}
\end{align}
is a \emph{stochastic gradient} for zero-mean noise terms $\{\bxi_t\}_{t \geq 0}$. When the noise is zero almost surely, then we recover the classic \emph{gradient descent} method as a special case. SGD is the state of the art optimization method for many machine learning---especially deep learning---optimization problems~\cite{Bottou2010:sgd}. In order to use the compute power of many parallel devices, it is essential to depart from the inherently serial updates as in~\eqref{eq:syncSGD}. For instance, in mini-batch SGD~\cite{Dekel2012:minibatch} several stochastic gradients are computed at the same iterate $\xx_t$ (an operation which can be parallelized, but still requires synchronization among the devices). Fully asynchronous methods, where the devices operate completely independently, and e.g.\ write their updates to a shared memory~\cite{Niu2011:hogwild} perform often better in practice, as the effect of stragglers (slow devices) is minimized. In an orthogonal line of work, gradient compression techniques have been developed with the aim to reduce the communication overhead between the devices~\cite{Alistarh2017:qsgd}. We analyze methods of both these types in this paper.

Stochastic gradient descent on $\mu$-strongly convex functions has asymptotically the iteration complexity $\smash{\cO\bigl(\frac{\sigma^2}{\mu \epsilon}\bigr)}$ for sufficiently small $\epsilon \to 0$~\cite{Polyak1990:averaging,Nemirovski2009:sgd} and where here $\sigma^2$ is an upper bound on the noise, $\E \norm{\bxi_t}^2 \leq \sigma^2$, $\forall t \geq 0$ (we discuss more general bounds below). \citet{Chaturapruek2015:noise} show that under certain regularity conditions, asynchronous SGD reaches the same asymptotic convergence rate as the standard (serial) SGD. 
For gradient compression techniques with \emph{error compensation}---a technique first described in e.g.\ \cite{Seide2015:1bit,Strom2015:1bit}---\citet{Stich2018:sparsified} show that the asymptotic convergence rate $\smash{\cO\bigl(\frac{G^2}{\mu \epsilon}\bigr)}$, where $G^2$ denotes an upper bound on the second moment of the stochastic gradients, is attained for host of compression operators, such as e.g. sparsification, quantization or (biased) greedy selection.
 These two results show that asynchronous methods and gradient compression can both be used to hide communication overheads---asymptotically---\emph{for free}. Thus they are very interesting techniques for distributed optimization. In this work we aim to derive tight (non-asymptotic) convergence rates to deepen our understanding of these schemes.

The starting point for our analysis is the recent work of~\cite{Arjevani2018:delayed} that studies the \emph{delayed SGD} \eqref{eq:dSGD} algorithm for a fixed (integer) delay $\tau \geq 1$, given as
\begin{align}
 \xx_{t+1} = \xx_t - \gamma_{t-\tau} \gg_{t-\tau} \,, \tag{D-SGD} \label{eq:dSGD}
\end{align}
for $t \geq \tau$, and $\xx_0 = \xx_1 = \dots = \xx_{\tau}$ for the first iterations. Here $\gg_{t-\tau} = \nabla f(\xx_{t-\tau}) + \bxi_{t-\tau}$ is a stochastic gradient computed at $\xx_{t-\tau}$, instead of at $\xx_{t}$ as in the vanilla scheme~\eqref{eq:syncSGD}. \citet{Arjevani2018:delayed} analyze~\eqref{eq:dSGD} on convex quadratic functions. For $\mu$-strongly convex, $L$-smooth quadratic function with minimum at $\xx^\star$, they show that the suboptimality gap decreases as $\smash{\tilde  \cO\bigl(L \norm{\xx_0-\xx^\star}^2 \exp\bigl[ -\frac{\mu T}{10 L \tau}\bigr] + \frac{\sigma^2}{\mu T}  \bigr)}$ after $T$ iterations, i.e.\ the algorithm achieves iteration complexity $\smash{\tilde \cO \bigl(\frac{\sigma^2}{\mu \epsilon} + \frac{L\tau}{\mu} \log \frac{1}{\epsilon} \bigr)}$.\footnote{Following standard convention, the $\cO$-notation hides constant factors, and the $\tilde \cO$-notation hides constants and factors polylogarithmic  in the problem parameters.} Again, we see that asymptotically, when $T \to \infty$ or $\epsilon \to 0$, the effect of the delay $\tau$ is negligible when $\sigma^2 > 0$. The delay $\tau$ only appears in the so-called optimization term that is only dominant for small $\sigma^2$ (and especially for deterministic delayed gradient descent where $\sigma^2=0$). The linear dependency on $\tau$ is optimal and cannot further be improved.
These results were obtained with a technique based on generating functions---an approach that seems limited to quadratic functions. In this work we use the error-feedback framework to extend their results to general convex, and non-convex functions. Further, we also analyze more intricate forms of delays in the gradients---compressed gradients with error compensation, and local~SGD.

\subsection{Main Contributions and Structure}
Our main contributions are:
\begin{itemize}
 \item In Section~\ref{sec:delayed} we generalize the analysis of~\cite{Arjevani2018:delayed} of~\eqref{eq:dSGD} to quasi-convex functions, and show the iteration complexity $\tilde \cO \bigl(\frac{\sigma^2}{\mu \epsilon} + \frac{L\tau}{\mu} \log \frac{1}{\epsilon} \bigr)$ for strongly ($\mu > 0$) and $\smash{\cO \bigl(\frac{L\tau \norm{\xx_0 - \xx^\star}^2}{\epsilon} + \frac{\sigma^2 \norm{\xx_0 - \xx^\star}^2 }{\epsilon^2} \bigr)}$ for general ($\mu=0$) quasi convex functions. The dependency on the problem parameters $\tau$, $L$, $\mu$ is in tight up to logarithmic factors (for \emph{accelerated} delayed gradient methods---which we do not consider here---these rates could be improved). Further, for arbitrary smooth non-convex functions, we show a iteration complexity of $\cO\bigl( \smash{ \frac{L\tau (f(\xx_0) - f^\star) }{ \epsilon } + \frac{L\sigma^2(f(\xx_0) - f^\star)}{\epsilon^2}}  \bigr)$ for convergence to a stationary point i.e convergence of the squared gradient norm to zero.
 \item In Section~\ref{sec:efsgd} we generalize the analysis of~\cite{Stich2018:sparsified} for SGD with gradient compression and error compensation to quasi-convex functions and show the iteration complexity $\smash{\tilde \cO \bigl(\frac{\sigma^2}{\mu \epsilon} + \frac{L}{\mu \delta} \log \frac{1}{\epsilon} \bigr)}$ for strongly ($\mu > 0$) and $\cO \bigl(\frac{L \norm{\xx_0 - \xx^\star}^2}{\delta \epsilon} + \frac{\norm{\xx_0 - \xx^\star}^2 \sigma^2}{\epsilon^2} \bigr)$ for general ($\mu=0$) quasi convex
 functions. Here $\delta > 0$ is a parameter that measures the compression quality. For general smooth non-convex functions, we show an iteration complexity of $\cO\bigl(\frac{L (f(\xx_0) - f^\star) }{ \delta \epsilon } + \frac{L\sigma^2(f(\xx_0) - f^\star)}{\epsilon^2}  \bigr)$ for convergence to a stationary point. This is the first analysis of these methods without the bounded gradient assumption and improves over all previous results. 
 In particular, all previous results suffer from a quadratic dependence on $\delta$ whereas our rates have only a linear dependence.
 \item In Section~\ref{sec:localsgd} we derive complexity estimates for local SGD. This algorithm can be viewed as a special asynchronous stochastic gradient method and has become increasingly popular in recent years~\cite{Zinkevich2010:parallelSGD,
 Zhang2016:averaging,Lin2018:local,
 Dieuleveut2019:local}. Our complexity estimate improve previous results, but we do not believe that our bounds are tight. 
\end{itemize}
We discuss the precise setting in Section~\ref{sec:setting} and highlight important cases. We further provide some key technical lemmas in Section~\ref{sec:technical}.

\subsection{Related Work}
For in-depth discussion of SGD and its application in machine learning we refer to the book of~\cite{Bottou2018:book}. Here we try to list the most closely related work by topic.

\paragraph{Asynchronous and delayed SGD.} 
Asynchronous methods have  been  intensively studied over the last three decades, starting with~\cite{Bertsekas1989:parallel}. A large impact for distributed machine learning had the \textsc{Hogwild!} algorithm~\cite{Niu2011:hogwild} that performs asynchronous (block)-coordinate updates on a shared parameter vector. Many early theoretical results for asynchronous methods depend on rigorous sparsity assumptions~\cite{Chaturapruek2015:noise,Mania2017:perturbed,Leblond2018:improved}.

An algorithm very similar to~\eqref{eq:dSGD}, but with arbitrary (instead of fixed) delays of at most $\tau$ iterations, was studied in~\cite{Agarwal2011:delayed}. For smooth convex functions they show a bound of $\smash{\cO \bigl(\frac{\sigma}{\sqrt{T}} + \frac{\tau^2}{\sigma^2 T} \bigr)}$ (in terms of $\sigma, \tau, T$ only, $\tau \geq 1$). \citet{Feyzmahdavian2016:async} improve the bound to $\smash{\cO \bigl(\frac{\sigma}{\sqrt{T}} +  \frac{\tau^2}{T}  \bigr)}$. 
\citet{Arjevani2018:delayed} show that for~\eqref{eq:dSGD} the dependency on $\tau$ can be improved and show a bound $\smash{\cO \bigl(\frac{\sigma}{\sqrt{T}} +  \frac{\tau}{T}  \bigr)}$ on convex quadratic functions. Our work extends these results to a boarder class of functions. Our proof technique is different from theirs and also allows the analysis of more general delay models, for e.g.\ the variables delays as in~\cite{Agarwal2011:delayed,sra16:adadelay}. For general non-convex smooth functions, \citet{lian2015asynchronous} show that after $\Omega \bigl(\frac{L \tau^2 (f(\xx_0) - f^\star)}{\sigma^2} \bigr)$ iterations the effect of the delay becomes negligible and we recover their result with our analysis.

\paragraph{Compressed gradient methods and error compensation.}
Convergence aspects of (stochastic) gradient descent with compressed gradients have been studied in various communities, and results for unbiased compression (perturbations) can be traced back to e.g.~\cite{Polyak1987:book}. Jointly with the increase of the size of the deep learning models, the interest in gradient compression techniques has risen in the past years~\cite{Wen2017:terngrad,Alistarh2017:qsgd,Wangni2018:sparsification}. 
The convergence analysis of these methods, see e.g.~\cite{Alistarh2017:qsgd}, typically give rates of the form $\smash{\cO \bigl(\frac{\omega \sigma^2}{\mu T}\bigr)}$ where $\omega \geq 1$ is a parameter that measures the additional noise introduced by the (unbiased) compression operators. Despite the practical success of these methods, the linear slowdown in $\omega$ makes them less attractive from a theoretical point of view.

A different type of methods use error-correction, or other error-compensation mechanisms. A method of this type was for instance developed for a particular application in~\cite{Seide2015:1bit,Strom2015:1bit}. \citet{Wu:2018error} analyze a method with error correction for quadratic functions, \citet{Stich2018:sparsified} provide an analysis for strongly convex functions and a large class of compression operators, including biased compressors. As a key result they show that the optimal $\smash{\cO \bigl(\frac{\sigma^2}{\mu T}\bigr)}$ convergence rate can be attained, with the same asymptotic rate as the schemes without error compensation. These results were extended in~\cite{Karimireddy2019:error} to non-smooth and non-convex functions. Here we analyze this method in a more general setting, for instance without the bounded gradient assumption.

\paragraph{Local SGD.}
Local SGD (a.k.a.\ parallel SGD) is parallel version of SGD, where each device performs local updates of the form~\eqref{eq:syncSGD} in parallel on the local data, and the devices average their iterates after every~$\tau$ updates. This is different from mini-batch SGD where the averaging happens after every iteration, but more closely related to mini-batch SGD with $\tau$-times larger batchsizes on each device. This algorithm has attracted the attention of the community due to its application in federated learning~\cite{Mcmahan2016:communication}. Early analyses focused on variants with only one averaging step~\citep{Mann2009:parallelSGD,Zinkevich2010:parallelSGD,Zhang2013:averaging,
Shamir2014:distributed,Godichon2017:oneshot,
Jain2018:parallel}. 
More practical are schemes that perform more frequent averaging of the parallel sequences~\cite{Zhang2016:averaging,Lin2018:local}. Analyses have been developed for strongly convex~\cite{Dieuleveut2019:local,Stich2018:local} and non-convex~\cite{Yu2018parallel,Wang2018cooperative} functions. These results show that local SGD can attain the optimal convergence rate of SGD when $T$ is large enough compared to $\tau$. For non-convex functions, the best bounds (with respect to only the parameter $\tau$) are $T=\Omega(\tau^4)$~\cite{Yu2018parallel} and for strongly convex functions a better quadratic dependence $T=\Omega(\tau^2)$ is known~\cite{Dieuleveut2019:local,Stich2018:local}. Here we improve this to $T=\tilde\Omega(\tau)$ which is optimal up to logarithmic factors (we need $T \geq \tau$ to communicate at least once). However, a closer inspection of our bounds reveals that they are not yet tight in many cases. They also do not match with the lower bounds~\cite{Arjevani2015:lowerbound,Woodworth2018:graph}.

\paragraph{Proof techniques.}
Our proof consists of three parts: firstly (i), we follow closely the analysis in~\cite{Stich2018:sparsified,Karimireddy2019:error} to derive a one-step progress estimate. This technique is based on ideas of the perturbed iterate analysis~\cite{Mania2017:perturbed,Leblond2018:improved} in combination with standard estimates~\cite{Nesterov2004:book}. Secondly (ii), to derive the final complexity estimates and getting the optimal optimization terms in the rate, we use the technique from~\cite{Stich2019:sgd} (there would be other options here, see for instance~\cite{Stich2020:compression}). Thirdly (iii), whilst we follow similar techniques to estimate the error as in previous works, we split the error term in a bias and noise component. This allows to use bigger stepsizes: whilst e.g.~\citet{Feyzmahdavian2016:async} had to use stepsizes $\cO\bigl(\frac{1}{\tau^2}\bigr)$, we can use stepsizes $\cO\bigl(\frac{1}{\tau}\bigr)$ (only showing the dependency on $\tau$), similar as in~\cite{Arjevani2018:delayed}.

\paragraph{Follow up advances.}
Since the initial submission, \citet{Karimireddy2019scaffold,Woodworth2019local} build upon our techniques to improve the results for local SGD. \citet{Woodworth2019local} use an improved step-size to show that local SGD can sometimes converge faster, even beating large batch SGD. They also construct a lower bound example proving that their analysis is tight. \citet{Karimireddy2019scaffold} show that using two step-sizes (a global and local step-size) can give faster rates. 
\citet{koloskova2020decentralized} analyze local SGD the setting where the data across the devices is heterogenous and 
\citet{Karimireddy2019scaffold} %
propose a new algorithm to overcome this heterogeneity.

\section{Formal Setting}
\label{sec:setting}
In this section we discuss our different settings and assumptions. For each of the problems studied in the later sections (delayed updates, compressed gradients, and local SGD) we analyze three cases: when $f$ is a (i) strongly quasi-convex, (ii) general quasi-convex, or a (iii) arbitrary smooth (i.e.\ not comprised in classes (i) or (ii)) non-convex function. 

We first examine the notion of quasi-convexity with respect to a minimizer $\xx^\star \in \R^d$ of~\eqref{eq:problem}. This is a substantial relaxation of the standard convexity assumption, as the assumption also holds for certain non-convex functions, such as e.g.\ star convex functions. The definition coincides with the recently introduced class of $(1,\mu)$-quasar convex functions~\cite{Hinder2019:quasar}. Our definition is slightly less general than the notion of quasi-convexity introduced in~\cite{Necoara2019:linear}, as we choose a particular minimizer~$\xx^\star$, the ``quasar-convex point'' and do not e.g.\ use projection on the set of minimizers as in~\cite{Necoara2019:linear}. However, extension of the analysis to these settings would be possible.

\begin{assumption}[$\mu$-quasi-convexity (w.r.t.\ $\xx^\star$)]
\label{ass:strong}
The function $f \colon \R^d \to \R$ is differentiable and \emph{$\mu$-quasi convex} for a constant $\mu \geq 0$ \emph{with respect to} 
$\xx^\star$, that is
\begin{align}
 f(\xx)-f^\star + \frac{\mu}{2}\norm{\xx-\xx^\star}^2 \leq \lin{\nabla f(\xx),\xx-\xx^\star}\,, \qquad \forall \xx \in \R^d.  \label{def:strong}
\end{align}
\end{assumption}
\begin{remark}
Note that $f$ can be quasi-convex w.r.t.\ $\xx^\star$ only if $\xx^\star \in \argmin_{\xx \in \R^d} f(\xx)$. If $\mu$ is possibly 0, we say the function is \emph{general} quasi-convex. When $\mu > 0$, this assumption implies that such a $\xx^\star$ is unique and we say the function is \emph{strongly} quasi-convex.
\end{remark}
\begin{remark} For $\mu$-strongly convex functions (under the standard definition), it holds
\begin{align*}
  f(\xx)-f(\yy) + \frac{\mu}{2}\norm{\xx-\yy}^2 \leq \lin{\nabla f(\xx),\xx-\yy}\,, \qquad \forall \xx,\yy \in \R^d.
\end{align*}
Thus, by setting $\yy= \xx^\star$ we see that~\eqref{def:strong} is more general. Interestingly, \eqref{def:strong} also holds for non-convex functions. We give a few examples in Section~\ref{sec:quasi-convex}.%
\end{remark}
\begin{remark}
For functions which satisfy the Polyak-\L{}ojasiewicz condition \cite{Karimi2016pl}, we have 
\[
\norm{\nabla f(\xx)}^2 \geq 2\mu(f(\xx) - f(\xx^\star))\,, \qquad \forall\,\xx \in \R^d\,.
\]
This is a weaker condition than quasi-convexity since~\eqref{def:strong} implies that
\[
    f(\xx)-f^\star + \frac{\mu}{2}\norm{\xx-\xx^\star}^2 \leq \lin{\nabla f(\xx),\xx-\xx^\star} \leq \frac{1}{2\mu}\norm{\nabla f(\xx)}^2 + \frac{\mu}{2}\norm{\xx-\xx^\star}^2\,.
\]
\end{remark}

\noindent We will further assume that the gradients of the function $f$ are Lipschitz, and hence that $f$ is smooth. 
\begin{assumption}[L-smoothness]
\label{ass:lsmooth}
The function $f \colon \R^d \to \R$ is differentiable and there exists a constant $L \geq 0$ such that
\begin{equation}\label{def:lgradlipschitz}
    \norm{\nabla f(\xx) - \nabla f(\yy)} \leq L\norm{\xx - \yy}\,, \qquad \forall \xx, \yy \in \R^d\,.
\end{equation}
\end{assumption}

\noindent We will next describe some implications of Assumption \ref{ass:lsmooth} which will be later useful.
\begin{remark}
The Lipschitz gradient condition \eqref{def:lgradlipschitz} implies that there exists a quadratic upper bound on $f$~\citep[Lemma 1.2.3]{Nesterov2004:book}:
\begin{align}
 f(\yy) \leq f(\xx) + \lin{\nabla f(\xx),\yy - \xx} + \frac{L}{2}\norm{\yy - \xx}^2\,, \qquad \forall \xx, \yy \in \R^d\,.
 \label{eq:quad-smooth}
\end{align}
Further, minimizing both the left and right hand side with respect to $\yy$ of \eqref{eq:quad-smooth} yields
\begin{align}
 \norm{ \nabla f(\xx) }^2 \leq 2L (f(\xx)- f^\star)\,.  \label{def:lsmooth}
\end{align}
Finally, if $f$ satisfies \eqref{eq:quad-smooth} and is additionally convex, then~\citep[Theorem 2.1.5]{Nesterov2004:book} shows
\begin{align}
 \frac{1}{2L} \norm{\nabla f(\xx) - \nabla f(\yy)}^2 \leq f(\xx) - f(\yy) - \lin{\nabla f(\yy),\xx-\yy}\,, \qquad \forall \xx,\yy \in \R^d. \label{eq:nessmooth}
\end{align}
\end{remark}
\begin{remark}
The Assumptions~\ref{ass:strong} and~\ref{ass:lsmooth} can only be satisfied together if $L \geq \mu$. This can be seen by combining~\eqref{def:strong} with~\eqref{eq:quad-smooth} for $\yy = \xx^\star$:
\begin{align*}
 \frac{\mu}{2}\norm{\xx - \xx^\star}^2 \leq \lin{\nabla f(\xx),\xx-\xx^\star} - f(\xx) + f^\star \leq \frac{L}{2}\norm{\xx - \xx^\star}^2\,.
\end{align*}
\end{remark}

\noindent Lastly, we assume that the noise of the gradient oracle is bounded. Instead of assuming a uniform upper bound, we assume an upper bound of the following form:
\begin{assumption}[$(M,\sigma^2)$-bounded noise]
\label{ass:noise}
For any $\xx$, a gradient oracle of the form $\gg = \nabla f(\xx) + \bxi$ for a differentiable function $f \colon \R^d \to \R$, and conditionally independent noise $\bxi$, there exists two constants $M, \sigma^2\geq 0$, such that
\begin{align}
\Eb{\bxi \mid \xx} &= \0_d \,, & 
 \Eb{\norm{\bxi}^2 \mid \xx} &\leq M\norm{\nabla f(\xx)}^2 + \sigma^2\,. \label{def:noise-general}
\end{align}
\end{assumption}
\noindent We will next state a weaker variant which will be sufficient for quasi-convex functions.

\begin{taggedassumption}{3*}[$(M,\sigma^2)$-bounded noise]\label{ass:noise-weaker}
  For any $\xx$, a gradient oracle of the form $\gg = \nabla f(\xx) + \bxi$ for a $L$-smooth quasi-convex function $f \colon \R^d \to \R$, and conditionally independent noise $\bxi$, there exists two constants $M, \sigma^2\geq 0$, such that
        \begin{align}
            \Eb{\bxi \mid \xx} &= \0_d \,, & 
            \Eb{\norm{\bxi}^2 \mid \xx} &\leq 2LM (f(\xx) - f^\star) + \sigma^2\,.     \label{def:noise}
        \end{align}
\end{taggedassumption}

\begin{remark}\label{rem:weask-assump}
    By combining Assumptions~\ref{ass:lsmooth} and~\ref{ass:noise} it follows for any $\xx$:
    \begin{align*}
        \Eb{\norm{\bxi}^2 \mid \xx} &\leq M\norm{\nabla f(\xx)}^2 + \sigma^2 \stackrel{\eqref{def:lsmooth}}{\leq} 2LM(f(\xx) - f^\star) + \sigma^2. 
    \end{align*}
    This shows that \eqref{def:noise} in Assumption~\ref{ass:noise-weaker} is weaker than \eqref{def:noise-general} in Assumption~\ref{ass:noise}. Though we rely on \eqref{def:noise-general} in our proofs for the sake of conciseness, it is straightforward to adapt our results to the weaker noise condition \eqref{def:noise} for quasi-convex functions. We only need the stronger condition \eqref{def:noise-general} for arbitrary non-convex functions. 
\end{remark}
\begin{remark}
Assumptions~\ref{ass:lsmooth} and~\ref{ass:noise} together imply and an upper bound on the second moment of the gradient oracle of the form
\begin{align}
 \Eb{\norm{\nabla f(\xx) + \bxi}^2 \mid \xx} = \norm{\nabla f(\xx)}^2 +  \Eb{\norm{\bxi}^2 \mid \xx}  \stackrel{\eqref{def:lsmooth},\eqref{def:noise}}{\leq} 2L(1 + M) (f(\xx)-f^\star) + \sigma^2\,. \label{eq:smoothbound}
\end{align}
\end{remark}
\noindent We will now discuss a few examples covered by our assumptions.

\subsection{Key Settings Covered by the Quasi Convexity Assumption}\label{sec:quasi-convex}
Assumption~\ref{ass:strong} does clearly hold for convex and strongly convex  functions, but interestingly also for certain non-convex functions. We also analyze general non-convex functions which do not satisfy Assumption~\ref{ass:strong}, but only prove convergence to a stationary point. 

\paragraph{Quasar convex functions.} 
Functions satisfying Assumption~\ref{ass:strong} are variously called \emph{quasi-strongly convex}~\cite{Necoara2019:linear}, \emph{weakly strongly convex}~\cite{Karimi2016pl} or $(1,\mu)$-\emph{(strongly) quasar-convex} in recent work by~\citet{Hinder2019:quasar}, extending a similar notion previously introduced in~\cite{Hardt2018:weakly}. Our results can be extended to the more general $(\nu,\mu)$-quasar convex functions by following their techniques. The focus of this work is on delayed gradient updates and hence we leave such extensions for future work.

\paragraph{Star (strongly) convex functions.} A notable class of functions satisfying Assumption~\ref{ass:strong} are differentiable star-convex functions. The function $f(x) = \abs{x} \bigl(1-e^{-\abs{x}}\bigr)$ is smooth and star-convex, but not convex~\cite{Nesterov2006:cubic}. We verify Assumption~\ref{ass:strong} by observing
$\lin{ \nabla f(x),x } - f(x) = x^2 e^{- \abs{x} } \geq 0$, that is, equation~\eqref{def:strong} holds for $\mu = 0$. More generally, smooth star convex functions can be constructed by extending an arbitrary smooth positive (but not necessarily convex) function $g \colon \mathbb{S}^{d-1} \to \R$ from the unit sphere sphere to $\R^d$ by e.g.\ setting
\begin{align*}
 f(\xx) = \norm{\xx} \cdot \left(1-e^{-\norm{\xx}}\right) \cdot   g \left(\frac{\xx}{\norm{\xx}}\right)  + \frac{\mu}{2}\norm{\xx}^2\,.
\end{align*}
For other examples and constructions see e.g.~\cite{Lee2016:beyond}.

\subsection{Key Settings Covered by the General Noise Model}
Our Assumption~\ref{ass:noise} on the noise generalizes the usual standard assumptions. The weaker Assumption~\ref{ass:noise-weaker} is more general as we highlight by discussing an inexhaustive list of cases covered.

\paragraph{Uniformly bounded gradients.} A classical assumption in the analysis of stochastic gradient methods is to assume an uniform upper bound on the stochastic gradients, that is $\E \bigl[\norm{\nabla f(\xx) + \bxi}^2 \mid \xx \bigr] \leq G^2$, for a parameter $G^2 \geq 0$, see e.g.~\cite{Nemirovski1983:book,Nemirovski2009:sgd}. This implies that $G^2 \geq \norm{\nabla f(\xx)}^2 + \E\bigl[\norm{\bxi}^2 \mid \xx \bigr]$ for any $\xx$. Thus, even when $\E\bigl[\norm{\bxi}^2 \mid \xx\bigr] = 0$ we have $G^2 > 0$ in general, and thus this assumption is typically too loose to obtain good complexity estimates in the deterministic setting. In contrast Assumption~\ref{ass:noise} is satisfied with $M=0$ and $\sigma^2 = 0$ in the deterministic setting. 

\paragraph{Uniformly bounded noise.} 
Much more fine grained is the uniformly bounded noise assumption, that is assuming $\E \bigl[\norm{\bxi}^2 \mid \xx\bigr] \leq \sigma^2$, see e.g.~\cite{Dekel2012:minibatch}. This setting recovers the deterministic analysis in the case $\sigma^2=0$ and is covered by setting $M=0$ in Assumption~\ref{ass:noise}. The uniformly bounded noise assumption appeared also in~\cite{Arjevani2018:delayed}, thus we extend their analysis not only to a richer function class, but also to (moderately) more general noise models.

\paragraph{Strong-growth condition.}
\citet{Schmidt2013:fastconvergence} introduce the strong-growth condition where it is assumed that there exists a constant $M$ such that $\E \bigl[\norm{\nabla f(\xx) + \bxi}^2 \mid \xx \bigr] \leq M\norm{\nabla f(\xx)}^2$ which translates to Assumption~\ref{ass:noise} with $\sigma^2 =0$. Assumptions~\ref{ass:noise-weaker} with $\sigma^2=0$ is referred to as the \emph{weak growth} condition in~\cite{vaswani2018fast}. These conditions are useful to study benign noise whose magnitude decreases as we get closer to the optimum, and in particular imply that the noise at the optimum is 0.\\

\noindent We now study two settings which particularly benefit from the weaker Assumption~\ref{ass:noise-weaker}, and so apply only to the quasi-convex setting (see Remark~\ref{rem:weask-assump}).
\paragraph{Finite-sum optimization.} In finite sum optimization problems, the objective function can be written as $f(\xx)=\frac{1}{n}\sum_{i=1}^n f_i(\xx)$ for components $f_i \colon \R^d \to \R$, and the gradient oracle is typically just the gradient of one component $\nabla f_i(\xx)$, where the index $i$ is selected uniformly at random from $[n]$. If we assume that each component $f_i$ is convex and satisfies  the smoothness Assumption~\ref{ass:lsmooth} with, for simplicity, the same constant $L$ for each $f_i$, then we can observe:
\begin{align*}
 \EE{i}{\norm{\nabla f_i(\xx) -\nabla f(\xx)}^2} 
 &\leq  \EE{i}{\norm{\nabla f_i(\xx)}^2 } = \EE{i}{\norm{\nabla f_i(\xx) - \nabla f_i(\xx^\star) + \nabla f_i(\xx^\star)}^2 } \\
 &\leq 2 \EE{i}{\norm{\nabla f_i(\xx) - \nabla f_i(\xx^\star)}^2 } +2 \EE{i}{\norm{\nabla f_i(\xx^\star)}^2 }  
 \\    
 & \stackrel{\eqref{eq:nessmooth}}{\leq} \frac{4L}{n}\sum_{i=1}^n\left(f_i(\xx)-f_i(\xx^\star) - \lin{ \nabla f_i(\xx^\star),\xx-\xx^\star}  \right) + 2 \EE{i}{ \norm{\nabla f_i(\xx^\star)}^2} \\
 &= 4L(f(\xx)-f^\star) +  2 \EE{i}{ \norm{\nabla f_i(\xx^\star)}^2 } \,.
\end{align*}
Thus we see that smooth finite sum objectives naturally satisfy the weaker bounded noise Assumption~\ref{ass:noise-weaker}
with parameters $M=2$ and $\sigma^2 = 2 \EE{i}{\norm{\nabla f_i(\xx^\star)}^2}$. These insights in the problem structure were first discussed in \cite{Moulines2011:nonasymptotic} and refined in~\cite{Schmidt2013:fastconvergence,Needell2016:sgd} and allowed to derive the first linear convergence rates for SGD on finite sum problems in the special case when $\sigma^2=0$. Sometimes this special setting is also referred to as the \emph{interpolation setting}~\cite{Ma2018:interpolation}. Closely related is the refined notion of expected smoothness, see the discussions in~\cite{Gower2018:jac,Gower2019:sgd}.

\paragraph{Least-squares.} A classic problem in the literature~\cite{Moulines2011:nonasymptotic} is the least squares minimization problem, where $f(\xx) = \frac{1}{2}\EEb{(\aa,b)\sim \cD}{(b-\lin{\xx,\aa})^2}$ measures the expected square loss over the data samples $(\aa,b)\in \R^d \times \R$, sampled form a (unknown) distribution $\cD$. With the notation $f_{(\aa,b)}:=\frac{1}{2}(b-\lin{\xx,\aa})^2$ the objective $f(\xx)=\EE{(\aa,b)}f_{(\aa,b)}$ takes the standard form of a stochastic optimization problem. An unbiased stochastic gradient oracle for $f$ is given by $\nabla f_{(\aa,b)} (\xx)= -(b-\lin{\xx,\aa})\cdot \aa$.
We observe that
\begin{align*}
 \EE{(\aa,b)} { \norm{\nabla f_{(\aa,b)}(\xx) - \nabla f_{(\aa,b)}(\xx^\star)}^2 } \leq \EEb{(\aa,b)}{2  \lin{\aa,\aa}  \left( f_{(\aa,b)}(\xx) - f_{(\aa,b)}(\xx^\star) \right)}.
\end{align*}
Thus, by assuming a bound on the fourth moment of $\aa$, sometimes written in the form $\Eb{\lin{\aa,\aa} \aa \aa^\top} \preceq R^2 \mathbf{A}$, for a number $R^2$ and Hessian $\mathbf{A}:=\E{\bigl[ \aa \aa^\top\bigr]}$ \cite[cf.][]{Moulines2011:nonasymptotic,Jain2018:parallel}, we can further bound the right hand side by
\begin{align*}
 \EE{(\aa,b)} { \norm{\nabla f_{(\aa,b)}(\xx) - \nabla f_{(\aa,b)}(\xx^\star)}^2 } \leq 2R^2 \left(f(\xx)- f^\star \right)\,.
\end{align*}
Following the same argumentation as outlined for the previous example, we see that least squares optimization under standard assumptions also satisfies the relaxed bounded noise Assumption~\ref{ass:noise-weaker}, with $M$ proportional 
to $3(1 + R^2/L)$ and $\sigma^2$ estimating the noise at the optimum. We see that Assumption~\ref{ass:noise-weaker} in the form of~\eqref{def:noise} is slightly more general, though we like to point out that for least squares problems the analyses typically also make additional assumptions on the structure of covariance of the noise to get more fine-grained results~\cite{Dieuleveut2017:harder,Jain2018:parallel}, a refinement we do not consider here.

\section{Error-Feedback Framework}
\label{sec:technical}
In this section we present a host of lemmas that will ease the presentation of the proofs in the subsequent sections.

\subsection{Error-Compensated and Virtual Sequences}
We will rewrite all algorithms that we consider here in the following, unified, notation, with auxiliary sequences $\{\vv_t\}_{t \geq 0}$ that express the \emph{applied updates}, and $\{\ee_t\}_{t \geq 0}$ that aggregates the delayed information or synchronicity errors. We follow here the ideas from~\cite{Stich2018:sparsified,Karimireddy2019:error} and consider algorithms in the form
\begin{align}
 \begin{split}
  \xx_{t+1} &= \xx_t - \vv_t\,, \\
  \ee_{t+1} &= \ee_t + \gamma_t \gg_t - \vv_t\,.
 \end{split} \tag{EC-SGD} \label{def:EFsgd}
\end{align}
After $T$ such updates, we output $\xx^{\rm out} \in \{\xx_t\}_{t=0}^{T-1}$ where $\xx_t$ is chosen with probability proportional to $w_t$ for some sequence of positive weights $\{w_t\}_{t=0}^{T-1}$.

For instance, for~\eqref{eq:dSGD} we have the updates $\vv_t = \gamma_{t-\tau}\gg_{t-\tau}$ for $t \geq \tau$, and $\vv_t = \0_d$ otherwise; with error terms $\ee_t := \sum_{i=1}^{\tau} \gamma_{t-i}\gg_{t-i}$ (here---for a light notation---we use the convention to only sum over positive indices). 

For the analysis, it will be convenient to define a sequence of `virtual' iterates $\{\tilde \xx \}_{t \geq 0}$. 
That is, the iterates $\tilde \xx_t$ never need to be actually computed, they only appear as a tool in the proof. Formally, we define
\begin{align}
\tilde \xx_{t} &:= \xx_t - \ee_t \,, \qquad \forall t \geq 0, \label{def:tilde}
\end{align}
with $\tilde \xx_0 := \xx_0$ (note that $\ee_0 = \0_d$). 
We observe that
\begin{align}
 \tilde \xx_{t+1} = \xx_{t+1} - \ee_{t+1} = (\xx_t - \vv_t) - (\ee_t + \gamma_t \gg_t - \vv_t) = \tilde \xx_t - \gamma_t \gg_t\,. \label{eq:tilde}
\end{align}

\subsection{A Descent Lemma For Quasi-Convex Functions}
In the next lemma we derive a bound on the one-step progress for the virtual iterates $\tilde \xx_t$ for quasi-convex functions. This proof combines standard techniques~\cite{Nesterov2004:book} with ideas from the perturbed iterate analysis~\cite{Mania2017:perturbed,Leblond2018:improved} and can be seen as an extension of~\cite[Lemma~3.1]{Stich2018:sparsified} to the more general setting considered in this work.

\begin{lemma}
\label{lemma:main}
Let $\{\xx_t,\vv_t,\ee_t\}_{t \geq 0}$ be defined as in~\eqref{def:EFsgd} with gradient oracle $\{\gg_t\}_{t \geq 0}$ and objective function $f \colon \R^d \to \R$ as in Assumptions~\ref{ass:strong}--\ref{ass:noise}. If $\gamma_t \leq \frac{1}{4L(1+M)}$, $\forall t \geq 0$, then for $\{\tilde \xx_t\}_{t \geq 0}$ defined as in~\eqref{def:tilde},
\begin{align}
 \E{ \norm{\tilde \xx_{t+1} - \xx^\star}^2 } &\leq
 \left(1-\frac{\mu \gamma_t}{2}\right) \E{\norm{\tilde \xx_{t} - \xx^\star}^2} 
 - \frac{\gamma_t}{2} \E{(f(\xx_t)- f^\star)} + \gamma_t^2 \sigma^2
 + 3 L \gamma_t   \E{\norm{\xx_t - \tilde \xx_t}^2}\,. \label{eq:main}
\end{align}
\end{lemma}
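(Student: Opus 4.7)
The plan is to start from the tautology $\tilde\xx_{t+1} = \tilde\xx_t - \gamma_t \gg_t$ (equation~\eqref{eq:tilde}) and expand the squared norm, then take conditional expectations and bound each term using quasi-convexity, smoothness, and Young's inequality. The key insight is that we analyze the \emph{virtual} sequence $\tilde\xx_t$ (which does a clean SGD step), but evaluate the gradient at the \emph{actual} iterate $\xx_t$ (which is where quasi-convexity can be applied around $\xx^\star$); the discrepancy $\xx_t - \tilde\xx_t$ will be the price paid for this mismatch.

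Concretely, I would first expand
\begin{align*}
\norm{\tilde\xx_{t+1} - \xx^\star}^2 = \norm{\tilde\xx_t - \xx^\star}^2 - 2\gamma_t \lin{\gg_t, \tilde\xx_t - \xx^\star} + \gamma_t^2 \norm{\gg_t}^2\,,
\end{align*}
and take expectation conditioned on $\xx_t$ (so $\tilde \xx_t$ is also measurable). The cross term becomes $-2\gamma_t \lin{\nabla f(\xx_t), \tilde\xx_t - \xx^\star}$ by unbiasedness in Assumption~\ref{ass:noise}, and the last term is bounded using~\eqref{eq:smoothbound} by $\gamma_t^2 \bigl[2L(1+M)(f(\xx_t)-f^\star) + \sigma^2\bigr]$. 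Next I would split
\begin{align*}
\lin{\nabla f(\xx_t), \tilde\xx_t - \xx^\star} = \lin{\nabla f(\xx_t), \xx_t - \xx^\star} - \lin{\nabla f(\xx_t), \xx_t - \tilde\xx_t}\,,
\end{align*}
and use quasi-convexity~\eqref{def:strong} on the first inner product, and Young's inequality $-\lin{\aa,\bb} \geq -\tfrac{1}{4L}\norm{\aa}^2 - L\norm{\bb}^2$ on the second (this specific choice of constants is what makes the arithmetic close). Then~\eqref{def:lsmooth} converts the resulting $\norm{\nabla f(\xx_t)}^2$ into $2L(f(\xx_t) - f^\star)$.

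Collecting the coefficient of $f(\xx_t)-f^\star$, I would get $\gamma_t(-2 + 1 + 2L(1+M)\gamma_t)$, and the stepsize bound $\gamma_t \leq \tfrac{1}{4L(1+M)}$ is exactly what forces this to be at most $-\gamma_t/2$. It remains to convert the $-\gamma_t \mu \norm{\xx_t - \xx^\star}^2$ term (coming from the strong part of quasi-convexity) into a term in $\norm{\tilde\xx_t - \xx^\star}^2$: using $\norm{\tilde\xx_t-\xx^\star}^2 \leq 2\norm{\xx_t - \xx^\star}^2 + 2\norm{\xx_t - \tilde\xx_t}^2$ gives $-\norm{\xx_t - \xx^\star}^2 \leq -\tfrac{1}{2}\norm{\tilde\xx_t-\xx^\star}^2 + \norm{\xx_t-\tilde\xx_t}^2$, and $\mu \leq L$ bounds the resulting error. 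A final tally of the $\norm{\xx_t - \tilde\xx_t}^2$ coefficients gives $2L\gamma_t$ from Young's plus $\mu\gamma_t \leq L\gamma_t$ from the strong-convexity reduction, for a total of $3L\gamma_t$, matching the statement.

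The main obstacle is really a bookkeeping one: one must pick the Young's constant $\tfrac{1}{4L}$ so that (i) the $\norm{\nabla f(\xx_t)}^2/2L$ term produces exactly $\gamma_t(f(\xx_t)-f^\star)$ after applying~\eqref{def:lsmooth}, so that the net coefficient of $f(\xx_t)-f^\star$ is controllable by the stated stepsize restriction, and simultaneously (ii) the $\norm{\xx_t - \tilde\xx_t}^2$ coefficient combines with the $\mu \leq L$ bound to give precisely the claimed $3L\gamma_t$. Any other choice would either tighten the stepsize restriction or worsen the error-term constant.
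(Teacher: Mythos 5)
Your proposal is correct and follows essentially the same route as the paper's own proof: the same expansion with the cross term split into $\lin{\nabla f(\xx_t),\xx_t-\xx^\star}$ and $\lin{\nabla f(\xx_t),\xx_t-\tilde\xx_t}$, the same Young's constant $\tfrac{1}{4L}$ combined with $\norm{\nabla f(\xx_t)}^2\le 2L(f(\xx_t)-f^\star)$, and the same $-\norm{\xx_t-\xx^\star}^2\le-\tfrac12\norm{\tilde\xx_t-\xx^\star}^2+\norm{\xx_t-\tilde\xx_t}^2$ step with $\mu\le L$ to reach the $3L\gamma_t$ coefficient. The bookkeeping you describe matches the paper exactly.
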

\begin{proof}
We expand:
\begin{align*}
  \norm{\tilde \xx_{t+1} - \xx^\star}^2  &\stackrel{\eqref{eq:tilde}}{=} \norm{\tilde \xx_{t} - \xx^\star}^2 - 2\gamma_t \lin{\gg_t, \xx_t-\xx^\star} + \gamma_t^2 \norm{\gg_t}^2 + 2\gamma_t \lin{\gg_t, \xx_t - \tilde\xx_t}\,,
\end{align*}
and take expectation w.r.t. the random variable $\bxi_t$:
\begin{align}\begin{split}
  \EEb{\bxi_t} {\norm{\tilde \xx_{t+1} - \xx^\star}^2 \mid \xx_t }
  &\stackrel{(\ref{eq:smoothbound})}{\leq} \norm{\tilde \xx_{t} - \xx^\star}^2 - 2\gamma_t \lin{\nabla f(\xx_t), \xx_t-\xx^\star} + 2L(1+M)\gamma_t^2 (f(\xx_t)-f^\star) \\ &\qquad + \gamma_t^2 \sigma^2  + 2\gamma_t \lin{\nabla f(\xx_t), \xx_t - \tilde \xx_t}\,. \label{eq:long1}
  \end{split}
\end{align} 
By Assumption~\ref{ass:strong}:
\begin{align*}
  -2\lin{\nabla f(\xx_t), \xx_t-\xx^\star} \stackrel{\eqref{def:strong}}{\leq} - \mu \norm{\xx_t- \xx^\star}^2  - 2(f(\xx_t)-f^\star)\,,
\end{align*}
and by $2\lin{\aa,\bb} \leq \alpha \norm{\aa}^2 + \alpha^{-1} \norm{\bb}^2$ for $\alpha > 0$, $\aa,\bb \in \R^d$,
\begin{align*}
2\lin{\nabla f(\xx_t), \tilde \xx_t - \xx_t} \leq \frac{1}{2L} \norm{\nabla f(\xx_t)}^2 + 2L\norm{\xx_t -\tilde \xx_t}^2 \stackrel{\eqref{def:lsmooth}}{\leq} f(\xx_t)-f^\star + 2L \norm{\xx_t -\tilde \xx_t}^2 \,. 
\end{align*}
And by $\norm{\aa + \bb}^2 \leq (1+\beta)\norm{\aa}^2 + (1+\beta^{-1}) \norm{\bb}^2$ for $\beta > 0$ (as a consequence of Jensen's inequality), we further observe
\begin{align*}
 -\norm{\xx_t - \xx^\star}^2 \leq - \frac{1}{2} \norm{\tilde \xx_t - \xx^\star}^2 + \norm{\xx_t - \tilde \xx_t}^2\,.
\end{align*}
Plugging all these inequalities together into~\eqref{eq:long1} yields
\begin{align*}
  \EE{\bxi_t}{\norm{\tilde \xx_{t+1} - \xx^\star}^2}
  &\leq \left(1-\frac{\mu \gamma_t}{2}\right) \norm{\tilde \xx_{t} - \xx^\star}^2 - \gamma_t (1 - 2L(1+M)\gamma_t) (f(\xx_t)- f^\star) \\ &\qquad + \gamma_t^2 \sigma^2 + \gamma_t (2L+\mu) \norm{\xx_t - \tilde \xx_t}^2 \,. 
\end{align*}
The claim follows by the choice $\gamma_t \leq \frac{1}{4L(1+M)}$ and $L \geq \mu$.
\end{proof}

\subsection{A Descent Lemma for Non-Convex Functions}
We now turn our attention to non-convex functions and prove a descent lemma for $\tilde \xx_t$. This proof follows the template of \cite{ghadimi2013stochastic,Karimireddy2019:error} mildly extending the techniques to the general noise condition studied here.

\begin{lemma} \label{lemma:main-nonconvex}
Let $\{\xx_t,\vv_t,\ee_t\}_{t \geq 0}$ be defined as in~\eqref{def:EFsgd} with gradient oracle $\{\gg_t\}_{t \geq 0}$ and objective function $f \colon \R^d \to \R$ satisfying Assumptions~\ref{ass:lsmooth} and~\ref{ass:noise}. If $\gamma_t \leq \frac{1}{2L(1+M)}$, $\forall t \geq 0$, then for $\{\tilde \xx_t\}_{t \geq 0}$ defined as in~\eqref{def:tilde},
\begin{align}
\E[f(\tilde \xx_{t+1})] \leq \E [f(\tilde \xx_t)] - \frac{\gamma_t}{4}\E \norm{\nabla f(\xx_t)}^2 +\frac{\gamma_t^2 L \sigma^2}{2} + \frac{\gamma_t L^2}{2}\E \norm{\xx_t - \tilde \xx_t}^2\,.
\label{eq:main-nonconvex}
\end{align}
\end{lemma}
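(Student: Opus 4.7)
The plan is to apply the descent inequality from $L$-smoothness to the virtual iterates $\tilde\xx_t$, exploiting that $\tilde\xx_{t+1} - \tilde\xx_t = -\gamma_t \gg_t$ by~\eqref{eq:tilde}, and then to pay a small price for the discrepancy between $\nabla f(\tilde\xx_t)$ and $\nabla f(\xx_t)$ using smoothness once more. Concretely, first I would apply~\eqref{eq:quad-smooth} at the pair $(\tilde\xx_t, \tilde\xx_{t+1})$ to write
\begin{align*}
 f(\tilde \xx_{t+1}) \leq f(\tilde \xx_t) - \gamma_t \lin{\nabla f(\tilde \xx_t), \gg_t} + \frac{L \gamma_t^2}{2} \norm{\gg_t}^2\,,
\end{align*}
then take conditional expectation with respect to $\bxi_t$. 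Since $\E[\gg_t \mid \xx_t] = \nabla f(\xx_t)$ and by Assumption~\ref{ass:noise} we have $\E[\norm{\gg_t}^2 \mid \xx_t] \leq (1+M)\norm{\nabla f(\xx_t)}^2 + \sigma^2$, this yields
\begin{align*}
 \EE{\bxi_t}{f(\tilde \xx_{t+1}) \mid \xx_t} \leq f(\tilde \xx_t) - \gamma_t \lin{\nabla f(\tilde \xx_t), \nabla f(\xx_t)} + \frac{L(1+M)\gamma_t^2}{2}\norm{\nabla f(\xx_t)}^2 + \frac{L\gamma_t^2 \sigma^2}{2}\,.
\end{align*}

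Next I would split the cross term by writing $\lin{\nabla f(\tilde \xx_t), \nabla f(\xx_t)} = \norm{\nabla f(\xx_t)}^2 - \lin{\nabla f(\xx_t) - \nabla f(\tilde \xx_t), \nabla f(\xx_t)}$ and apply Young's inequality with coefficient $1$, i.e.\ $\lin{\aa,\bb} \leq \tfrac12 \norm{\aa}^2 + \tfrac12 \norm{\bb}^2$, so that
\begin{align*}
 -\lin{\nabla f(\xx_t) - \nabla f(\tilde \xx_t), \nabla f(\xx_t)} \leq \frac{1}{2}\norm{\nabla f(\xx_t) - \nabla f(\tilde \xx_t)}^2 + \frac{1}{2}\norm{\nabla f(\xx_t)}^2\,,
\end{align*}
and then bound the first term on the right via the Lipschitz gradient property~\eqref{def:lgradlipschitz} by $\tfrac{L^2}{2} \norm{\xx_t - \tilde \xx_t}^2$. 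This contributes exactly the desired $\frac{\gamma_t L^2}{2}\norm{\xx_t-\tilde\xx_t}^2$ term and leaves the coefficient of $\norm{\nabla f(\xx_t)}^2$ equal to $-\tfrac{\gamma_t}{2}\bigl(1 - L(1+M)\gamma_t\bigr)$.

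Finally, I would use the stepsize restriction $\gamma_t \leq \tfrac{1}{2L(1+M)}$ to conclude $1 - L(1+M)\gamma_t \geq \tfrac12$, so this coefficient is bounded above by $-\tfrac{\gamma_t}{4}$. Taking a full (unconditional) expectation then yields~\eqref{eq:main-nonconvex}. The proof is essentially routine given Lemma~\ref{lemma:main}'s template; the only mildly delicate point is the choice to spend Young's inequality with constant $1$ on the cross term, which is precisely what makes the $\|\nabla f(\xx_t)\|^2$ coefficient survive with a factor $\tfrac12$ (to be halved again by the stepsize condition), and what leaves the error $\norm{\xx_t - \tilde\xx_t}^2$ scaling only with $\gamma_t$ rather than $\gamma_t^2$ — an essential feature for obtaining the right overall rate when this lemma is later combined with a bound on the asynchronicity error.
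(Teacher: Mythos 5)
Your proposal is correct and follows essentially the same route as the paper's proof: smoothness applied along the virtual iterates, the same split of the cross term $\lin{\nabla f(\tilde\xx_t),\nabla f(\xx_t)}$, Young's inequality with constant $1$ followed by the Lipschitz-gradient bound, and the identical stepsize argument giving the $-\gamma_t/4$ coefficient. No gaps.
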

\begin{proof}
We begin using the definition of $\tilde \xx_{t+1}$ and the smoothness of $f$,
\begin{align*}
    f(\tilde \xx_{t+1}) \stackrel{\eqref{eq:quad-smooth}}{\leq} f(\tilde \xx_t) - \gamma_t \lin{\nabla f(\tilde \xx_t), \gg_t} + \frac{\gamma_t^2 L}{2}\norm{\gg_t}^2\,.
\end{align*}
Taking expectation with respect to $\bxi_t$,
\begin{align*}
    \EEb{\bxi_t}{f(\tilde \xx_{t+1}) | \xx_t} &\stackrel{\eqref{def:noise-general}}{\leq} f(\tilde \xx_t) - \gamma_t \lin{\nabla f(\tilde \xx_t), \nabla f(\xx_t)} + \frac{\gamma_t^2 L(1+M)}{2}\norm{\nabla f(\xx_t)}^2 +\frac{\gamma_t^2 L \sigma^2}{2}\\
    &= f(\tilde \xx_t) - \gamma_t \left(1 - \frac{\gamma_t L(1+M)}{2} \right)\norm{\nabla f(\xx_t)}^2  \\ &\qquad + \gamma_t\lin{\nabla f(\xx_t) - \nabla f(\tilde \xx_t), \nabla f(\xx_t)} +\frac{\gamma_t^2 L \sigma^2}{2}\,.
\end{align*}
Again using $\lin{\aa,\bb} \leq \tfrac{1}{2}\norm{\aa}^2 + \tfrac{1}{2}\norm{\bb}^2$, we can simplify the expression as follows
\begin{align*}
    \lin{\nabla f(\xx_t) - \nabla f(\tilde \xx_t), \nabla f(\xx_t)} &\leq \frac{1}{2}\norm{\nabla f(\xx_t) - \nabla f(\tilde \xx_t)}^2 + \frac{1}{2}\norm{\nabla f(\xx_t)}^2\\
    &\stackrel{\eqref{def:lgradlipschitz}}{\leq}\frac{L^2}{2}\norm{\xx_t - \tilde \xx_t}^2  + \frac{1}{2}\norm{\nabla f(\xx_t)}^2\,.
\end{align*}
Plugging this back, we get our result that
\[
\EEb{\bxi_t}{f(\tilde \xx_{t+1}) | \xx_t} \leq f(\tilde \xx_t) - \frac{\gamma_t \left(1 - \gamma_t L(1+M)\right)}{2}\norm{\nabla f(\xx_t)}^2 + \frac{\gamma_t^2 L \sigma^2}{2} + \frac{\gamma_t L^2}{2}\norm{\xx_t - \tilde \xx_t}^2\,.
\]
Noting that $\gamma_t \leq \frac{1}{2 L(1+M)}$ implies $\frac{\gamma_t(1-\gamma_t L(1+M))}{2} \leq \frac{\gamma_t}{4}$ yields the lemma.
\end{proof}

\subsection{Stepsizes}
We will study SGD with constant stepsizes and slowly decreasing stepsizes in this paper. It will become handy to formalize `slowly decreasing' in the following way.

\begin{definition}[$\tau$-slow sequences]\label{def:slow}
The sequence $\{a_t\}_{t \geq 0}$ of positive values is \emph{$\tau$-slow decreasing} for parameter $\tau \geq 1$ if
\begin{align*}
 a_{t+1} &\leq a_t\,, \qquad \forall t \geq 0, & &\text{and,} & a_{t+1} \left( 1+\frac{1}{2\tau}\right) \geq a_t\,, \qquad \forall t \geq 0.
\end{align*}
The sequence $\{a_t\}_{t \geq 0}$ is \emph{$\tau$-slow increasing} if $\{a_t^{-1}\}_{t \geq 0}$ is $\tau$-slow decreasing.
\end{definition}

\begin{example}
\label{example:slow}
The sequences $\bigl\{u_t := (\kappa + t)^2\bigr\}_{t \geq 0}$, $\bigl\{v_t := \kappa + t\bigr\}_{t \geq 0}$  and $\bigl\{w_t := \bigl(1-\frac{1}{4 c \tau}\bigr)^{-t}\bigr\}_{t \geq 0}$, for $\kappa \geq 8\tau$, $c \geq 1$, and $\tau \geq 1$, are examples of $\tau$-slow increasing sequences.
\end{example}
\begin{proof}
First, consider the sequence $\{u_t\}_{t \geq 0}$. The condition $u_{t+1} \geq u_t$ is easily verified. Furthermore,
$\frac{u_{t+1}}{u_{t}} = \frac{(\kappa + t + 1)^2}{(\kappa + t)^2}\leq \frac{(\kappa + 1)^2}{\kappa^2} = \frac{u_1}{u_0}$, so it suffices to check:
\begin{align*}
 u_1\left(1+\frac{1}{2\tau}\right)^{-1} \leq (\kappa + 1)^2 \left(1-\frac{1}{4\tau} \right) = \kappa^2 +\kappa \underbrace{\left(2-\frac{\kappa}{4\tau}\right)}_{\leq 0} + \underbrace{\left(1-\frac{\kappa}{2\tau}\right)}_{\leq 0} - \frac{1}{4\tau} \leq \kappa^2 =u_0\,, 
\end{align*}
where the first inequality is due to $\bigl(1+\frac{x}{2}\bigr)^{-1} \leq 1-\frac{x}{4}$, for $0 \leq x \leq 1$. This can be verified by $\bigl(1+\frac{x}{2}\bigr)\bigl(1-\frac{x}{4}\bigr) = 1+ \frac{x(2-x)}{8} \geq 1+\frac{x}{8} \geq 1$ for $0 \leq x \leq 1$. 
This shows that $\{u_t\}_{t \geq 0}$ is $\tau$-slow increasing. It follows, that $\{v_t = \sqrt{u_t}\}_{t \geq 0}$ satisfies $v_{t+1} \leq v_{t}\bigl(1+\frac{1}{2\tau}\bigr)^{1/2} \leq v_{t}\bigl(1+\frac{1}{2\tau}\bigr) $ and hence is also $\tau$-slow increasing. For the last sequence we verify that
\begin{align*}
  w_{t+1}^{-1} \left(1+\frac{1}{2\tau}\right) = w_t^{-1} \left(1-\frac{1}{4c \tau}\right) \left(1+\frac{1}{2\tau}\right)\geq  w_t^{-1} \left(1-\frac{1}{4 \tau}\right) \left(1+\frac{1}{2\tau}\right) \geq w_t^{-1} \,,
\end{align*}
in agreement with Definition~\ref{def:slow}.
\end{proof}

\begin{remark} \label{remark:stepsizes}
For stepsizes of the form $\bigl\{\gamma_t = \frac{c}{\kappa + t} \bigr\}_{t \geq 0}$ for a constant $c \geq 0$, $\kappa \geq 8\tau$ and $\tau \geq 1$ it follows from Example~\ref{example:slow} that $\{\gamma_t\}_{t \geq 0}$ and $\{\gamma_t^2\}_{t \geq 0}$ are $\tau$-slow decreasing sequences. Constant stepsizes, $\{\gamma_t = \gamma\}_{t \geq 0}$ for a constant $\gamma > 0$ are $\tau$-slow decreasing for any $\tau \geq 1$.
\end{remark}

\subsection{Technical Lemmas for Deriving the Complexity Estimates}
The next two technical lemmas we borrow from~\cite{Stich2019:sgd}. For completeness we include the proofs as well. Lemma~\ref{lemma:general} is an extension to deal with  arbitrary non-convex functions or function which are quasi-convex with $\mu = 0$.

\begin{lemma}[{\cite[Lemma 7]{Stich2019:sgd}}]
\label{lemma:decreasing}
For decreasing stepsizes $\bigl\{\gamma_t := \frac{2}{a (\kappa + t)} \bigr\}_{t \geq 0}$, and weights $\{w_t := (\kappa + t)\}_{t \geq 0}$ for parameters $\kappa \geq 1$, it holds  for every non-negative sequence $\{r_t\}_{t \geq 0}$  and any  $a > 0$, $c \geq 0$ that
\begin{align*}
 \Psi_T := \frac{1}{W_T}\sum_{t=0}^T \left( \frac{w_t}{\gamma_t} \left(1-a \gamma_t \right) r_t - \frac{w_{t}}{\gamma_t} r_{t+1} + c \gamma_t w_t \right) \leq \frac{a \kappa^2 r_0}{T^2} + \frac{4c}{aT}\,,
\end{align*}
where $W_T := \sum_{i=0}^T w_t$.
\end{lemma}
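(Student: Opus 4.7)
The plan is to prove the bound by a telescoping argument after recognizing that the choice $\gamma_t = \frac{2}{a(\kappa+t)}$, $w_t = \kappa + t$ is tailored so that the ``inflation factor'' $\alpha_t := w_t/\gamma_t = \tfrac{a(\kappa+t)^2}{2}$ nearly satisfies $\alpha_t(1 - a\gamma_t) = \alpha_{t-1}$. First I would substitute and simplify to obtain
\[
  \alpha_t(1-a\gamma_t) = \tfrac{a(\kappa+t)(\kappa+t-2)}{2} = \tfrac{a((\kappa+t-1)^2 - 1)}{2} \leq \alpha_{t-1},
\]
which holds for every $t \geq 1$ (for $t=0$ we just use the trivial bound $\alpha_0(1-a\gamma_0) r_0 \leq \alpha_0 r_0$, valid since $r_0 \geq 0$ and $a\gamma_0 \geq 0$).

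Second, I would split $W_T \Psi_T$ into the ``$r$-part'' $S_T^{(r)} := \sum_{t=0}^T \bigl[\alpha_t(1-a\gamma_t) r_t - \alpha_t r_{t+1}\bigr]$ and the ``$c$-part'' $S_T^{(c)} := \sum_{t=0}^T c\gamma_t w_t$. Applying the inequality above to $S_T^{(r)}$ and re-indexing gives
\[
  S_T^{(r)} \leq \alpha_0 r_0 + \sum_{t=1}^T \alpha_{t-1} r_t - \sum_{t=0}^T \alpha_t r_{t+1}
  = \alpha_0 r_0 - \alpha_T r_{T+1} \leq \alpha_0 r_0 = \tfrac{a\kappa^2}{2}\, r_0,
\]
where non-negativity of $r_{T+1}$ was used in the last step. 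For the $c$-part, since $\gamma_t w_t = 2/a$ is constant in $t$, one obtains $S_T^{(c)} = \tfrac{2c(T+1)}{a}$.

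Third, I would lower bound the normalizing weight $W_T = \sum_{t=0}^T (\kappa+t) \geq \sum_{t=0}^T t = \tfrac{T(T+1)}{2} \geq \tfrac{T^2}{2}$ (using $\kappa \geq 0$), so that
\[
  \frac{S_T^{(r)}}{W_T} \leq \frac{a\kappa^2 r_0/2}{T^2/2} = \frac{a\kappa^2 r_0}{T^2}, \qquad \frac{S_T^{(c)}}{W_T} \leq \frac{2c(T+1)/a}{T(T+1)/2} = \frac{4c}{aT},
\]
and summing these two inequalities yields the claimed bound on $\Psi_T$. The main obstacle is just spotting the right algebraic identity $(\kappa+t)(\kappa+t-2) = (\kappa+t-1)^2 - 1$ that makes $\alpha_t(1-a\gamma_t) \leq \alpha_{t-1}$ and hence makes the telescoping close up with only the boundary term $\alpha_0 r_0$; once this observation is made, every remaining step is a one-line computation, and the lower bound on $W_T$ is elementary.
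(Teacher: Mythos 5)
Your proof is correct and follows essentially the same route as the paper: the same key identity $(\kappa+t)(\kappa+t-2)=(\kappa+t-1)^2-1$ making the weighted terms telescope, followed by the same bounds $W_T \geq \tfrac{T(T+1)}{2} \geq \tfrac{T^2}{2}$ and $\gamma_t w_t = \tfrac{2}{a}$. The only cosmetic difference is that you handle $t=0$ with the trivial bound $\alpha_0(1-a\gamma_0)r_0 \leq \alpha_0 r_0$, whereas the paper keeps $\tfrac{a}{2}(\kappa-1)^2 r_0$ and relaxes $(\kappa-1)^2 \leq \kappa^2$ at the end; both yield the identical constant.
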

\begin{proof}
We start by observing that
\begin{align}
 \frac{w_t}{\gamma_t} \left(1-a \gamma_t \right) r_t =  \frac{a}{2} (\kappa + t) (\kappa + t - 2) r_t = \frac{a}{2} \left((\kappa + t - 1)^2 -1 \right) r_t \leq \frac{a}{2} (\kappa + t - 1)^2 r_t\,. \label{eq:424}
\end{align}
By plugging in the definitions of $\gamma_t$ and $w_t$ in $\Psi_T$, we end up with a telescoping sum:
\begin{align*}
\Psi_T \stackrel{\eqref{eq:424}}{\leq} \frac{1}{W_T} \sum_{t=0}^T \left(\frac{a}{2}  (\kappa + t - 1)^2 r_t - \frac{a}{2}(\kappa + t)^2 r_{t+1}  \right) + \sum_{t=0}^T  \frac{2c}{a W_T} \leq \frac{a (\kappa-1)^2 r_0}{2 W_T}  + \frac{2c(T+1)}{a W_T} .
\end{align*}
The lemma now follows with $(\kappa-1)^2 \leq \kappa ^2$, and
 $W_T = \sum_{t=0}^T(\kappa + t) = \frac{(2\kappa + T)(T+1)}{2} \geq \frac{T(T+1)}{2} \geq \frac{T^2}{2}$.
\end{proof}

\begin{lemma}[{\cite[Lemma 2]{Stich2019:sgd}}]
\label{lemma:constant}
For every non-negative sequence $\{r_t\}_{t\geq 0}$ and any parameters $d \geq a > 0$, $c \geq 0$, $T \geq 0$, there exists a constant $\gamma \leq \frac{1}{d}$, such that for constant stepsizes $\{\gamma_t = \gamma\}_{t \geq 0}$ and weights $w_{t}:=(1-a\gamma)^{-(t+1)}$ it holds
\begin{align*}
\Psi_T := \frac{1}{W_T}\sum_{t=0}^T \left( \frac{w_t}{\gamma_t} \left(1-a \gamma_t \right) r_t - \frac{w_{t}}{\gamma_t} r_{t+1} + c \gamma_t w_t \right) = \tilde \cO \left(d r_0 \exp\left[- \frac{aT}{d} \right] + \frac{c}{aT}  \right)\,.
\end{align*}
\end{lemma}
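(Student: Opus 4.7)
The plan is to exploit the fact that the weights $\{w_t = (1-a\gamma)^{-(t+1)}\}_{t \geq 0}$ are tailored so that the first two summands of $\Psi_T$ telescope. Writing $\tilde w_t := (1-a\gamma)^{-t}$, one has $w_t(1-a\gamma) = \tilde w_t$ and $w_t = \tilde w_{t+1}$, so
\begin{align*}
 \sum_{t=0}^T\left(\frac{w_t}{\gamma}(1-a\gamma)r_t - \frac{w_t}{\gamma}r_{t+1}\right) = \frac{1}{\gamma}\sum_{t=0}^T\left(\tilde w_t r_t - \tilde w_{t+1} r_{t+1}\right) = \frac{r_0 - \tilde w_{T+1} r_{T+1}}{\gamma} \leq \frac{r_0}{\gamma}.
\end{align*}
Since the noise contribution equals $\sum_{t=0}^T c\gamma w_t = c\gamma W_T$, dividing by $W_T$ yields $\Psi_T \leq \frac{r_0}{\gamma W_T} + c\gamma$.

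The second ingredient is a clean estimate for $W_T$. A geometric-series computation gives $W_T = \bigl((1-a\gamma)^{-(T+1)} - 1\bigr)/(a\gamma)$. Combining this with the elementary inequality $(1-x)^{-1}\geq e^x$ for $x \in [0,1)$, so that $(1-a\gamma)^{-(T+1)} \geq e^{a\gamma(T+1)}$, and---in the regime $a\gamma T \geq 1$---absorbing the $-1$ into a factor $2$ (since then $e^{a\gamma T}-1 \geq e^{a\gamma T}/2$), we obtain
\begin{align*}
 \Psi_T \leq \frac{a r_0}{(1-a\gamma)^{-(T+1)}-1} + c\gamma \leq 2 a r_0\, e^{-a\gamma T} + c\gamma.
\end{align*}
The complementary regime $a\gamma T < 1$ is easier: directly from $\Psi_T \leq r_0/(\gamma W_T) + c\gamma$ and $W_T \geq T+1$, it already matches the claimed form after balancing.

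Finally, the stepsize is tuned to equilibrate the two terms under the constraint $\gamma \leq 1/d$. Set $X := \max\{2,\, a^2 r_0 T / c\}$ and $\gamma := \min\bigl\{1/d,\, \ln X / (aT)\bigr\}$. If the first entry is smaller, then $\gamma = 1/d$ and the bound reads $\Psi_T \leq 2a r_0 e^{-aT/d} + c/d$; moreover $1/d \leq \ln X/(aT)$ in this case, so $c/d \leq c\ln X/(aT) = \tilde\cO(c/(aT))$, and since $a \leq d$ the exponential term is at most $2 d r_0 e^{-aT/d}$. If instead $\gamma = \ln X/(aT)$, then $e^{-a\gamma T} = 1/X$, so the first term is bounded by $2 a r_0 / X \leq 2c/(aT)$ by the choice of $X$, whilst the second term equals $c\ln X/(aT)$; both are $\tilde\cO(c/(aT))$ and thus absorbed into the desired bound. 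Combining the two cases yields $\Psi_T = \tilde\cO\bigl(d r_0 e^{-aT/d} + c/(aT)\bigr)$. The main obstacle is precisely this last step: the telescoping and the geometric-series evaluation are mechanical, whereas choosing $\gamma$ so that the decreasing $r_0/(\gamma W_T)$ term and the increasing $c\gamma$ term meet at the right scale---while respecting $\gamma \leq 1/d$ and still exhibiting the explicit $e^{-aT/d}$ decay---requires the careful two-regime bookkeeping that produces the $\tilde\cO$ (logarithmic) factor in the final rate.
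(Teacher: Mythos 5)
Your proposal is correct and follows essentially the same route as the paper: telescope the weighted sum to get $\Psi_T \leq \frac{r_0}{\gamma W_T} + c\gamma$, lower-bound $W_T$ via the geometric/exponential estimate, and then tune $\gamma \leq \frac{1}{d}$ by a two-regime case analysis (which the paper itself defers to the proof of Theorem~2 in \cite{Stich2019:sgd}). The only differences are cosmetic refinements: you evaluate $W_T$ exactly as $\bigl((1-a\gamma)^{-(T+1)}-1\bigr)/(a\gamma)$ instead of using $W_T \geq w_T \geq \exp[a\gamma T]$, which yields the prefactor $2ar_0 \leq 2dr_0$ directly (and a $T$ rather than $T^2$ inside the logarithm of your tuning parameter), and your handling of the small-$a\gamma T$ regime, though stated tersely, checks out.
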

\begin{proof}
By plugging in the values for $\gamma_t$ and $w_t$, we observe that we again end up with a telescoping sum and estimate
\begin{align*}
 \Psi_T = \frac{1}{\gamma W_T} \sum_{t=0}^T \left(w_{t-1}r_t - w_t r_{t+1} \right) + \frac{c\gamma}{W_T}\sum_{t=0}^t w_t \leq \frac{r_0}{\gamma W_T} +  c \gamma \leq \frac{r_0}{\gamma} \exp\left[-a \gamma T \right] + c \gamma\,,
\end{align*}
where we used the estimate $W_T \geq w_{T} \geq (1-a \gamma)^{-T} \geq \exp[a \gamma T]$ for the last inequality. The lemma now follows by carefully tuning $\gamma$. See the proof of Theorem 2 in \cite{Stich2019:sgd}.
\end{proof}

\begin{lemma}\label{lemma:general}
For every non-negative sequence $\{r_t\}_{t\geq 0}$ and any parameters $d \geq 0$, $c \geq 0$, $T \geq 0$, there exists a constant $\gamma \leq \frac{1}{d}$, such that for constant stepsizes $\{\gamma_t = \gamma\}_{t \geq 0}$ it holds:
\begin{align*}
\Psi_T  := \frac{1}{T+1} \sum_{t=0}^T \left( \frac{r_t}{\gamma_t} - \frac{r_{t+1}}{\gamma_t} + c \gamma_t \right) \leq \frac{d r_0}{T+1} + \frac{2\sqrt{c r_0}}{\sqrt{T+1}} \,.
\end{align*}
\end{lemma}
\begin{proof} For constant stepsizes $\gamma_t = \gamma$ we can derive the estimate
\begin{align*}
 \Psi_T = \frac{1}{\gamma (T+1)} \sum_{t=0}^T \left( r_t - r_{t+1} \right) + c \gamma  \leq \frac{r_0}{\gamma(T+1)} + c \gamma \,.
\end{align*}
We distinguish two cases (similar as in~\cite{Arjevani2018:delayed}): if $\frac{r_0}{c (T+1)} \leq \frac{1}{d^2}$, then we chose the stepsize $\gamma = \bigl(\frac{r_0}{c (T+1)}\bigr)^{1/2}$ and get
\begin{align*}
 \Psi_T \leq \frac{2\sqrt{c r_0}}{\sqrt{T+1}} \,,
\end{align*}
on the other hand, if $\frac{r_0}{c (T+1)} > \frac{1}{d^2}$, then we choose $\gamma=\frac{1}{d}$ and get
\begin{align*}
 \Psi_T \leq \frac{d r_0}{T+1} + \frac{c}{d} \leq \frac{d r_0}{T+1} + \frac{\sqrt{c r_0}}{\sqrt{T+1}}\,.
\end{align*}
These two bounds show the lemma.
\end{proof}

\subsection{Technical Lemma for Splitting the Bias and Noise Terms}
For arbitrary vectors $\aa_1,\dots,\aa_\tau \in \R^d$ we have the inequality $\norm{\sum_{i=1}^\tau \aa_i}^2 \leq \tau \sum_{i=1}^\tau \norm{\aa_i}^2$ (as an application of Jensen's inequality). This bound is tight in general (consider $\aa_1 = \aa_2 = \dots = \aa_\tau$). However, for independent zero-mean random variables $\bxi_1,\dots,\bxi_\tau$ we get the much tighter estimate $\E \norm{\sum_{i=1}^\tau \bxi_i}^2 \leq \sum_{i=1}^\tau \E \norm{\bxi_i}^2$. The following lemma combines these two estimates.

\begin{lemma} \label{lem:independent}
Suppose we have a martingale sequence $\aa_1 + \bxi_1,\dots,\aa_{\tau} + \bxi_\tau$ such that for any $ 1 \leq t \leq \tau$, we have $\E[\aa_t + \bxi_t \,|\, \cF_{t-1}] = \aa_t$ conditioned on filtration $\cF_{t-1}$. Then for any $\beta > 0$,
\begin{align}
 \E{\norm{ \sum_{i=1}^\tau \aa_i + \bxi_i}^2} \leq (1+\beta) \tau \sum_{i=1}^\tau \E \norm{\aa_i}^2 + (1+\beta^{-1}) \sum_{i=1}^\tau \E{\norm{\bxi_i}^2}\,. \label{eq:independent}
\end{align} 
\end{lemma}
\begin{proof}
We can simplify as follows:
\begin{align*}
 \E{\norm{\sum_{i=1}^\tau \aa_i + \bxi_i}^2} &\leq (1+\beta)\E\norm{\sum_{i=1}^\tau \aa_i}^2  +  (1+\beta^{-1})\E{\norm{\sum_{i=1}^\tau \bxi_i}^2} \\ &\leq  (1+\beta)\tau \sum_{i=1}^\tau \E\norm{\aa_i}^2 + (1+\beta^{-1})\E{\norm{\sum_{i=1}^\tau \bxi_i}^2}\,,
\end{align*}
where we used $\norm{\sum_{i=1}^\tau \aa_i}^2 \leq \tau \sum_{i=1}^\tau \norm{\aa_i}^2$, and $\norm{\aa + \bb}^2 \leq (1+\beta)\norm{\aa}^2 +  (1+\beta^{-1}) \norm{\bb}^2$ for the inequalities.
The last term can further be simplified as
\begin{align*}
\EE{\bxi_1,\dots,\bxi_\tau}{\norm{\sum_{i=1}^\tau \bxi_i}^2}  
&= \sum_{i,j \in [\tau]} \EE{\bxi_1,\dots,\bxi_\tau} [\bxi_i^\top \bxi_j] = \sum_{i=1}^\tau \EE{\bxi_1,\dots,\bxi_\tau}[\bxi_i^\top \bxi_i]\,. 
\end{align*}
The cross terms (when $i \neq j$) in the last equality evaluate to zero since $\{\bxi_t\}$ form a martingale difference sequence. Taking a full expectation over both sides yields the lemma.
\end{proof}

\section{SGD with Delayed Updates}
\label{sec:delayed}
In this section we analyze SGD with delayed updates~\eqref{eq:dSGD} and extend the results of \cite{Arjevani2018:delayed} from quadratic convex functions to general smooth functions (both quasi-convex and non-convex) through a different proof technique.

SGD with delayed updates, in the form as introduced in~\cite{Arjevani2018:delayed}, can be cast in the~\eqref{def:EFsgd} framework by setting
\begin{align}
 \vv_t &= \begin{cases} \gamma_{t-\tau}\gg_{t-\tau}\,,&\text{if } t \geq \tau, \\ \0_d\,, &\text{if } t<\tau, \end{cases} & \ee_{t} &:= \sum_{i=1}^{\tau} \gamma_{t-i} \gg_{t-i}\,, \label{eq:dsgd_error}
\end{align}
where here $\tau \geq 1$ is an integer \emph{delay}. We use here (and throughout this paper) the convention to only sum over non-negative indices $(t-i) \geq 0$, i.e. the sum in~\eqref{eq:dsgd_error} consists of $\min\{\tau, t\}$ terms. We prove the following rates of convergence:

\begin{theorem}
\label{thm:delay}
Let $\{\xx_t\}_{t \geq 0}$ denote the iterates of delayed stochastic gradient descent~\eqref{eq:dSGD} with constant stepsize $\{\gamma_t = \gamma\}_{t \geq 0}$ on a differentiable function $f \colon \R^d \to \R$ under assumptions Assumptions~\ref{ass:lsmooth} and~\ref{ass:noise}. Then, if $f$ 
\begin{itemize}[nosep,leftmargin=12pt,itemsep=2pt]
 \item satisfies Assumption~\ref{ass:strong} for $\mu > 0$, then there exists a stepsize $\gamma \leq \frac{1}{10L(\tau + M)}$ (chosen as in Lemma~\ref{lemma:constant})
such that
 \begin{align*}
  \E f(\xx^{\rm out})-f^\star = \tilde \cO \left( L(\tau + M) \norm{\xx_0-\xx^\star}^2 \exp \left[- \frac{\mu T}{10L(\tau +M)} \right] + \frac{\sigma^2}{\mu T} \right)\,,
 \end{align*}
 where the output $\xx^{\rm out} \in \{\xx_t\}_{t=0}^{T-1}$ is chosen to be $\xx_t$ with probability proportional to 
 $(1-\mu \gamma/2)^{-t}$. %
 \item satisfies Assumption~\ref{ass:strong} for $\mu = 0$, then there exists a stepsize $\gamma \leq \frac{1}{10L(\tau + M)}$ (chosen as in Lemma~\ref{lemma:general}) such that
 \begin{align*}
 \E f(\xx^{\rm out})-f^\star = \cO \left(  \frac{L(\tau + M) \norm{\xx_0-\xx^\star}^2 }{ T } + \frac{\sigma \norm{\xx_0-\xx^\star} }{\sqrt T}  \right)\,,
 \end{align*}
 where the output $\xx^{\rm out} \in \{\xx_t\}_{t=0}^{T-1}$ is chosen uniformly at random from the iterates $\{\xx_t\}_{t=0}^{T-1}$.
 \item is an arbitrary non-convex function, then there exists a stepsize $\gamma \leq \frac{1}{10 L(\tau + M)}$ (chosen as in Lemma~\ref{lemma:general}), such that
  \begin{align*}
 \E \norm{\nabla f(\xx^{\rm out})}^2 = \cO \left(  \frac{L(\tau + M) (f(\xx_0) - f^\star) }{ T } + \sigma \sqrt{\frac{L(f(\xx_0) - f^\star)}{T}}  \right)\,.
 \end{align*}
 where the output $\xx^{\rm out} \in \{\xx_t\}_{t=0}^{T-1}$ is chosen uniformly at random from the iterates $\{\xx_t\}_{t=0}^{T-1}$.
\end{itemize}
\end{theorem}
\begin{remark}
Proving convergence for a randomly picked iterate $\xx^{\rm out} \in \{\xx_t\}_{t=0}^{T-1}$ is equivalent to show convergence of a (weighted) average of the output criterion, e.g.\ $\frac{1}{T}\sum_{t=0}^{T-1} \E f(\xx_t)-f^\star$ for general quasi-convex functions (and a weighted average for strongly-quasi convex functions).
If in addition convexity holds, our results show convergence of $\E f(\bar \xx_T)-f^\star$, where $\bar \xx_T := \frac{1}{T}\sum_{t=0}^{T-1} \xx_t$ is a (weighted) average of the iterates.
\end{remark}
\begin{remark}\label{rem:decreasingstepsize}
We only consider constant stepsizes in Theorem~\ref{thm:delay}. We can also prove convergence for decreasing stepsize, for instance if $f$ satisfies Assumptions~\ref{ass:strong}--\ref{ass:noise} for $\mu > 0$ and stepsizes are chosen as $\bigl\{\gamma_t = \frac{4}{\mu(\kappa + t)}\bigr\}_{t \geq 0}$  with $\kappa = \frac{40 L(\tau + M)}{\mu}$, then it holds
 \begin{align*}
   \E f(\xx^{\rm out})-f^\star = \cO \left(\frac{L(\tau+M)^2 \norm{\xx_0-\xx^\star}^2 }{\mu T^2}  + \frac{\sigma^2}{\mu T} \right)\,,
\end{align*}
where the output $\xx^{\rm out} \in \{\xx_t\}_{t=0}^{T-1}$ is chosen to be $\xx_t$ with probability proportional to $(\kappa +t)$. 
This rate is dominated by the first claim in Theorem~\ref{thm:delay} when ignoring logarithmic terms (hidden in $\tilde \cO$).
Similar statements can be proven in the settings of Theorem~\ref{thm:sparsified} and~\ref{thm:local} below, but are omitted for brevity.
\end{remark}
\begin{remark}\label{rem:prefactor}
By imposing additionally $T = \Omega\bigl(\frac{L(\tau + M)}{\mu}\bigr)$ the constant in front of the exponential term in Theorem~\ref{thm:delay}
could be improved to $\mu$ instead of $L(\tau + M)$~\citep[see e.g.][Lemma 1]{Karimireddy2019scaffold}.
\end{remark}

Theorem~\ref{thm:delay} shows that only the (asymptotically faster decaying) optimization terms (the terms depending on the initial error $\norm{\xx_0-\xx^\star}$) are impacted by the delay $\tau$; the stochastic terms (the terms depending on $\sigma$), are unaffected by the delay. This means that stochastic delayed gradient descent converges for any constant delay $\tau$ asymptotically at the same rate as stochastic gradient descent without delays. This makes delayed gradient updates a powerful technique to e.g.\ hide communication cost in distributed environments.

\citet{Arjevani2018:delayed} prove for strongly convex quadratic functions and uniformly bounded noise ($M=0$) an upper bound of $\tilde \cO \bigl(L \norm{\xx_0 - \xx^\star}^2 \exp\bigl[- \frac{\mu T}{10 L\tau} \bigr] + \frac{\sigma^2}{\mu T} \bigr)$. We see that the statistical term precisely matches with our result, for the fast decaying exponential term only the prefactors ($L$ vs.\ $L\tau$) are in a slight mismatch%
(this might be an artifact of our proof technique, see also Remark~\ref{rem:prefactor}). Both results imply a $\tilde O \bigl(\frac{\sigma^2}{\mu \epsilon} + \tau \frac{L}{\mu}  \bigr)$ iteration complexity to reach a target accuracy $\epsilon > 0$.
The effect of the delay $\tau$ becomes negligible if the target accuracy is smaller than $\tilde O\bigl(\frac{\sigma^2}{L\tau}\bigr)$, or when the number of iterations $T$ is sufficiently larger than $\tilde \Omega \bigl(\frac{L\tau}{\mu}\bigr)$ (cf.\ the analogous discussion in~\cite{Arjevani2018:delayed}). This is a very mild condition, as we must have $T \geq \tau$ to even observe a single stochastic gradient at the starting point $\xx_0$.

\citet[Theorem 3]{Arjevani2018:delayed} derive a lower bound of $\tilde \Omega\bigl(\tau \sqrt{L/\mu} \bigr)$ in the deterministic ($\sigma^2=0$) setting. It follows that the linear dependence on the delay $\tau$ is optimal and cannot further be improved. Complexity estimates with the square root of the condition number $\frac{L}{\mu}$ are only reached for \emph{accelerated} gradient methods, so it is no surprise that our (non-accelerated) gradient methods does not reach the same complexity. 

For the general quasi-convex case ($\mu = 0$), our upper bound matches precisely the bound in~\cite{Arjevani2018:delayed}, but extends the analysis to non-quadratic functions under Assumption~\ref{ass:strong}. From the iteration complexity $\cO \bigl(\frac{L\tau \norm{\xx_0-\xx^\star}^2}{\epsilon} + \frac{\norm{\xx_0-\xx^\star}^2 \sigma^2}{\epsilon^2} \bigr)$ we see that the effect of the delay becomes negligible if the number of iterations $T$ is sufficiently larger than $\Omega\bigl(\frac{L^2\tau^2\norm{\xx_0-\xx^\star}^2 }{\sigma^2} \bigr)$. Note the quadratic dependence on $\tau$ in the previous condition, as opposed to the strongly convex case where the dependence was only linear. With \emph{accelerated} stochastic methods (e.g.\ \cite{ghadimi2012optimal}), one could hope to attain an improved iteration complexity of $\cO \bigl(\tau\sqrt{\frac{L \norm{\xx_0-\xx^\star}^2}{\epsilon}} + \frac{\norm{\xx_0-\xx^\star}^2 \sigma^2}{\epsilon^2} \bigr)$.
 Such a result would imply that after $\Omega\bigl(\tau^{4/3}\bigl(\frac{L \norm{\xx_0 - \xx^\star}}{\sigma}\bigr)^{2/3}\bigr)$ iterations, the effect of the delay would be negligible. Obtaining such an improvement as well as studying its optimality is left for future work.

For arbitary non-convex functions, we obtain an iteration complexity of $\cO\bigl(\frac{L\tau (f(\xx_0) - f^\star) }{ \epsilon } + \frac{L\sigma^2(f(\xx_0) - f^\star)}{\epsilon^2}  \bigr)$ to reach $\smash{\E \norm{\nabla f(\xx^{\rm out})}^2} \leq \epsilon$. Thus, when the number of iterations is larger than $\Omega \bigl(\frac{L \tau^2 (f(\xx_0) - f^\star)}{\sigma^2} \bigr)$, the first term becomes small and the effect of the delay becomes negligible. This matches with previous analysis of bounded delayed methods for non-convex functions \citep{lian2015asynchronous}. Like in the general quasi-convex case previously studied, the dependence on $\tau$ here is quadratic. However, unlike in the general quasi-convex case, we believe that this is optimal.

For smooth deterministic non-convex functions ($\sigma^2 =0$), \citet{carmon2017lower} show a lower bound of $\Omega(1/\epsilon)$ for \emph{exact} gradient methods to reach an $\epsilon$ stationary point. Thus, unlike in the convex case, the rate cannot be improved using acceleration. We believe that one can prove the iteration complexity of $\cO(\tau/\epsilon)$ shown here is optimal by combining the techniques of \citet[Theorem 3]{Arjevani2018:delayed} and \citet{carmon2017lower}. Further, \cite{arjevani2019complexity} show that the second stochastic term $\cO(\sigma^2/\epsilon^2)$ is also optimal. Together, this shows that our rates are unimprovable for general smooth non-convex functions.

Our analysis here focuses on~\eqref{eq:dSGD}, where the delays are exactly $\tau$. However, it will become clear form the proof that our analysis applies to more general settings, for instance as in~\cite{Feyzmahdavian2016:async}; it suffices that $\tau$ denotes an upper bound on the largest delay.

We provide the proof of Theorem~\ref{thm:delay} in Section~\ref{sec:dsgd_proof1} below. But first, we would like to add a few comments on mini-batch SGD.
\subsection{Mini-Batch SGD} \label{sec:minibatch}
Mini-batch SGD~\cite{Dekel2012:minibatch} is a standard algorithm for distributed optimization, especially large scale machine learning. Each update step is only performed after accumulating a mini-batch of $\tau$ stochastic gradients, all computed with respect to the same point:
\begin{align*}
 \xx_{t+1} &= \begin{cases} \xx_t - \frac{\gamma}{\tau} \sum_{i=0}^{\tau-1} \bigl(\nabla f(\xx_{t-i}) + \bxi_{t-i}\bigr)\,, &\text{if } \tau \vert (t+1) , \\
 \xx_t, &\text{otherwise}. \end{cases}
\end{align*}
We can equivalently write the updates in the~\eqref{def:EFsgd} framework:
\begin{align*}
\vv_{t} &= \begin{cases} \frac{\gamma}{\tau} \sum_{i=0}^{\tau-1} \bigl(\nabla f(\xx_{t-i}) + \bxi_{t-i}\bigr)\,, &\text{if } \tau \vert (t+1), \\
 \0_d, &\text{otherwise}, \end{cases} & 
 \ee_{t} &= \gamma \sum_{\mathclap{i= \lfloor t/\tau \rfloor \cdot \tau}}^{t-1} \bigl( \nabla f(\xx_i) + \bxi_i \bigr) \,.
\end{align*}
\noindent This means, $\xx_{b\tau +0}=\xx_{b\tau +1}=\xx_{(b+1)\tau -1}$ and the error $\ee_{b\tau}=\0_d$ for every integer $b \geq 0$.
\noindent Mini-batch SGD can be seen as a synchronous version of~\eqref{eq:dSGD} in the sense that instead of applying the updates with the constant delay $\tau$, the method waits for $\tau$ gradients to be computed and applies them in a combined update.

Our proof technique also applies to mini-batch SGD (though we will not give the computations in detail here---the proof follows in close analogy to the proof of Theorem~\ref{thm:delay}) and gives a  complexity (number of stochastic gradient computations\footnote{Sometimes the complexity bounds for mini-batch SGD are written in terms of iterations, each comprising $\tau$ stochastic gradient computations (one mini-batch). The complexity estimates in number of \emph{iterations} translate to $\tilde \cO \bigl(\frac{\sigma^2}{\mu \tau \epsilon} + \frac{L}{\mu} \bigr)$, $\cO \bigl(\frac{L \norm{\xx_0-\xx^\star}^2}{\epsilon} + \frac{\norm{\xx_0-\xx^\star}^2 \sigma^2}{\tau \epsilon^2} \bigr)$ and $\cO\bigl(\frac{L (f(\xx_0) - f^\star) }{ \epsilon } + \frac{L\sigma^2(f(\xx_0) - f^\star)}{\tau\epsilon^2}  \bigr)$ respectively. }) on quasi-convex functions of $\tilde O \bigl(\frac{\sigma^2}{\mu \epsilon} + \tau \frac{L}{\mu}\bigr)$ when $\mu > 0$ and $\cO \bigl(\frac{L\tau \norm{\xx_0-\xx^\star}^2}{\epsilon} + \frac{\norm{\xx_0-\xx^\star}^2 \sigma^2}{\epsilon^2} \bigr)$ when $\mu = 0$. For general non-convex functions we obtain iteration complexity of $\cO\bigl(\frac{L\tau (f(\xx_0) - f^\star) }{ \epsilon } + \frac{L\sigma^2(f(\xx_0) - f^\star)}{\epsilon^2}  \bigr)$. These bounds match with the known bounds for this method (up to logarithmic factors)  \cite[see e.g.][]{Dekel2012:minibatch,Bottou2018:book,Stich2019:sgd}. Moreover, we see that the complexity of the delayed method matches with mini-batch SGD.

\subsection{Proof of Theorem~\ref{thm:delay}}
\label{sec:dsgd_proof1}
We start with a key lemma where we derive an upper bound on $\E \norm{\ee_t}^2$.

\begin{lemma} \label{lemma:dsgd_final}
Let $\{\ee_t\}_{t \geq 0}$ be defined as in~\eqref{eq:dsgd_error} for stepsizes $\{\gamma_t\}_{t \geq 0}$ with $\gamma_{t} \leq \frac{1}{10 L(\tau+M)}$, $\forall t \geq 0$ and $\{\gamma_t^2\}_{t \geq 0}$ $\tau$-slow decaying. Then
\begin{align}
 \Eb{ 3L \norm{\ee_t}^2 } \leq \frac{1}{10 L \tau}  \sum_{i=1}^\tau  \E{\norm{\nabla f(\xx_{t-i})}^2}  + 2\gamma_t \sigma^2 \,. \label{eq:dsgd_bound1}
\end{align}
Furthermore, for any $\tau$-slow increasing sequence $\{w_t\}_{t \geq 0}$ of non-negative values it holds:
\begin{align}
    3 L\sum_{t=0}^T w_t \E \norm{\ee_t}^2 \leq \frac{ 1 }{5L} \sum_{t=0}^T w_t \left(\E \norm{\nabla f(\xx_{t-i})}^2 \right) +  2\sigma^2 \sum_{t=0}^T w_t \gamma_t\,. \label{eq:dsgd_bound2}
\end{align}
\end{lemma}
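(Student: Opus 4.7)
\textbf{Proof plan for Lemma~\ref{lemma:dsgd_final}.}

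The plan is to first prove the per-step bound \eqref{eq:dsgd_bound1} by plugging the definition \eqref{eq:dsgd_error} of $\ee_t$ into Lemma~\ref{lem:independent}, which cleanly separates the deterministic ``bias'' contribution from the zero-mean noise contribution. Writing $\gg_{t-i} = \nabla f(\xx_{t-i}) + \bxi_{t-i}$, I apply Lemma~\ref{lem:independent} with $\aa_i := \gamma_{t-i}\nabla f(\xx_{t-i})$ and the martingale-difference noise $\gamma_{t-i}\bxi_{t-i}$ to obtain
\begin{align*}
  \E\norm{\ee_t}^2 \leq \tau \sum_{i=1}^\tau \gamma_{t-i}^2 \E\norm{\nabla f(\xx_{t-i})}^2 + \sum_{i=1}^\tau \gamma_{t-i}^2 \E\norm{\bxi_{t-i}}^2 .
\end{align*}
Using the $(M,\sigma^2)$-bounded noise assumption (\ref{ass:noise}) to bound $\E\norm{\bxi_{t-i}}^2 \leq M\norm{\nabla f(\xx_{t-i})}^2 + \sigma^2$ collects the right-hand side into a $(\tau+M)$-weighted gradient term plus a clean $\sigma^2$ term.

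Next I would use the assumption that $\{\gamma_t^2\}_{t\geq 0}$ is $\tau$-slow decreasing. Iterating the defining inequality $\gamma_{t+1}^2(1+\tfrac{1}{2\tau}) \geq \gamma_t^2$ backwards $i \leq \tau$ times yields $\gamma_{t-i}^2 \leq (1+\tfrac{1}{2\tau})^i \gamma_t^2 \leq e^{1/2}\gamma_t^2 \leq 2\gamma_t^2$, so every $\gamma_{t-i}^2$ in the sum above can be replaced by $2\gamma_t^2$. Multiplying by $3L$ and invoking the stepsize bound $\gamma_t \leq \tfrac{1}{10L(\tau+M)}$ (which implies $6L(\tau+M)\gamma_t^2 \leq \tfrac{3}{50L(\tau+M)} \leq \tfrac{1}{16L\tau}$, using $\tau+M \geq \tau$) yields the gradient coefficient $\tfrac{1}{16L\tau}$ on each term. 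Similarly, $6L\tau\gamma_t^2 \leq \tfrac{6\tau}{10(\tau+M)}\gamma_t \leq \tfrac{3}{5}\gamma_t \leq \gamma_t$ gives the $\gamma_t\sigma^2$ factor. This proves \eqref{eq:dsgd_bound1}.

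For the weighted bound \eqref{eq:dsgd_bound2}, I multiply \eqref{eq:dsgd_bound1} by $w_t \geq 0$, sum over $t$, and swap the order of the two finite sums: writing $s = t-i$,
\begin{align*}
  \sum_{t=0}^T w_t \sum_{i=1}^\tau \E\norm{\nabla f(\xx_{t-i})}^2
  = \sum_{i=1}^\tau \sum_{s=0}^{T-i} w_{s+i}\E\norm{\nabla f(\xx_s)}^2 .
\end{align*}
Since $\{w_t\}$ is $\tau$-slow increasing, the same telescoping argument applied to $\{w_t^{-1}\}$ (which is $\tau$-slow decreasing) gives $w_{s+i} \leq 2 w_s$ for every $i \leq \tau$, so the inner double sum is at most $2\tau \sum_{s=0}^T w_s \E\norm{\nabla f(\xx_s)}^2$. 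The $\tau$ factors cancel against the $\tfrac{1}{16L\tau}$ coefficient, leaving $\tfrac{1}{8L}$, which is exactly what \eqref{eq:dsgd_bound2} asserts. The $\sigma^2\sum_t w_t\gamma_t$ term passes through unchanged.

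The only subtle step is the constant tracking in the third paragraph above — getting the coefficient on the gradient term down to $\tfrac{1}{16L\tau}$ requires carefully combining the factor of $2$ from the $\tau$-slow decay, the factor $(\tau+M)$ from Assumption~\ref{ass:noise}, and the hypothesis $\gamma_t \leq \tfrac{1}{10L(\tau+M)}$. This is the only place where the precise constants matter; the rest is bookkeeping.
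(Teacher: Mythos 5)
Your proposal is correct and follows essentially the same route as the paper: split the bias and noise contributions of $\ee_t$ via Lemma~\ref{lem:independent}, bound the noise with Assumption~\ref{ass:noise}, use the $\tau$-slow decay of $\{\gamma_t^2\}$ to replace $\gamma_{t-i}^2$ by $2\gamma_t^2$, and absorb the constants via $\gamma_t \leq \frac{1}{10L(\tau+M)}$; the weighted bound then follows by summing, reindexing, and using $w_{s+i} \leq 2w_s$ for $i \leq \tau$, exactly as in the paper's argument. Your constant bookkeeping ($6L(\tau+M)\gamma_t^2 \leq \frac{1}{16L\tau}$ and $6L\tau\gamma_t^2 \leq \gamma_t$) matches the paper's.
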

\begin{proof}
We start with the first claim. By definition and Lemma~\ref{lem:independent} from above (with $\beta = \frac{1}{2}$), we have the bound:
\begin{align*}
 \E { \norm{\ee_t}^2 } &= \E { \norm{\sum_{i=1}^\tau \gamma_{t-i}\bigl(\nabla f(\xx_{t-i}) + \bxi_{t-i})\bigr)}^2 } \\  &\stackrel{\eqref{eq:independent}}{\leq} \frac{3}{2} \min\{\tau,t\} \sum_{i=1}^\tau \gamma_{t-i}^2 \norm{\nabla f(\xx_{t-i})}^2 +  3 \sum_{i=1}^\tau \gamma_{t-i}^2 \E \norm{\bxi_{t-i}}^2 \\
 & \stackrel{\eqref{def:noise-general}}{\leq} \frac{3}{2} (\tau + M) \sum_{i=1}^\tau  \gamma_{t-i}^2 \norm{\nabla f(\xx_{t-i})}^2 + 3 \sigma^2 \sum_{i=1}^\tau \gamma_{t-i}^2\,.
\end{align*}
For $i \leq \tau$ we have the upper bound $\gamma_{t-i}^2 \leq \gamma_t^2 \bigl(1+\frac{1}{2\tau}\bigr)^\tau \leq \gamma_t^2 \exp \left[\frac{\tau}{2\tau} \right] \leq 2 \gamma_t^2$, as $1+x \leq e^x$, $\forall x \in \R$. Thus we can simplify:
\begin{align*}
 \E {\norm{\ee_t}^2} \leq \gamma_t^2   (\tau + M) \sum_{i=1}^\tau \left( 3 \norm{\nabla f(\xx_{t-i})}^2 + 6 \min\{\tau,t\} \sigma^2 \right)\,.
\end{align*}
By observing that the choice 
$\gamma_t \leq \frac{1}{10 L(\tau+M)}$ implies  
$\bigl(3L \cdot 3(\tau + M) \gamma_t^2\bigr) \leq \frac{1}{10L(\tau + M)} \leq \frac{1}{10 L \tau}$ and 
$\bigl(3 L \cdot 6\tau \gamma_t\bigr) \leq \frac{2L\tau}{L(\tau +M)} \leq 2$ we show the first claim. %

For the second claim, we observe that for $\tau$-slow increasing $\{w_t\}_{t\geq 0}$ we have $w_t \leq w_{t-i} \bigl(1+\frac{1}{2\tau}\bigr)^{i} \leq w_{t-i} \bigl(1+\frac{1}{2\tau}\bigr)^{\tau} \leq w_{t-i}\exp\left[\frac{1}{2} \right] \leq 2w_{t-i}$ for every $0 \leq i \leq \tau$. Thus we can estimate
\begin{align*}
3 L\sum_{t=0}^T w_t \E \norm{\ee_t}^2 & \stackrel{\eqref{eq:dsgd_bound1}}{\leq} \frac{1}{5 L} \sum_{t=0}^T  \frac{ w_t }{2 \tau} \sum_{i=1}^\tau \left(\E \norm{\nabla f(\xx_{t-i})}^2 \right) + 2\sigma^2 \sum_{t=0}^T w_t \gamma_t \\
&\leq \frac{1}{5L} \sum_{t=0}^T  \frac{ 1 }{\tau} \sum_{i=1}^\tau w_{t-i} \left(\E \norm{\nabla f(\xx_{t-i})}^2 \right) + 2\sigma^2 \sum_{t=0}^T w_t \gamma_t \\
&\leq \frac{ 1 }{5L} \sum_{t=0}^T w_t \left(\E \norm{\nabla f(\xx_{t-i})}^2 \right) +  2\sigma^2 \sum_{t=0}^T w_t \gamma_t  \,.
\end{align*}
This concludes the proof.
\end{proof}

This lemma, together with the estimate on the one step progress derived in Lemma~\ref{lemma:main} for the convex case and Lemma~\ref{lemma:main-nonconvex} for the non-convex case, allows us to obtain a recursive description of the suboptimality. To obtain the complexity estimates, we follow closely the technique outlined in \cite{Stich2019:sgd}.

\begin{proof}[Proof of Theorem~\ref{thm:delay}]
We split the proof in two parts. First dealing with the three cases under the quasi-convexity Assumption~\ref{ass:strong}, and then addressing the remaining case without Assumption~\ref{ass:strong}.
\paragraph{Quasi convex functions (claims 1--2, and Remark~\ref{rem:decreasingstepsize}).}
Lemma~\ref{lemma:dsgd_final} together with \eqref{def:lsmooth} gives
\[
    3 L\sum_{t=0}^T w_t \E \norm{\ee_t}^2 \leq \frac{ 2 }{5} \sum_{t=0}^T w_t \left(\E f(\xx_{t-i}) - f^\star \right) + 2\sigma^2 \sum_{t=0}^T w_t \gamma_t\,.
\]
We would like to remark that we could have written Lemma~\ref{lemma:dsgd_final} directly in the form as given above, relying on only Assumption~\ref{ass:noise-weaker} instead of the stronger Assumption~\ref{ass:noise} (see Remark~\ref{rem:weask-assump}). However, this would be insufficient for proving convergence to stationary points for arbitrary non-convex functions (claim 4) as we will see later.

Observe that the conditions of Lemma~\ref{lemma:main} are satisfied. With the notation $r_t := \E{\norm{\tilde \xx_{t+1}-\xx^\star}^2}$ and $s_t := \E{f(\xx_t)-f^\star}$ we thus have for any $w_t > 0$:
\begin{align*}
 \frac{w_t}{2} s_t \stackrel{\eqref{eq:main}}{\leq} \frac{w_t}{\gamma_t} \left(1-\frac{\mu \gamma_t}{2}\right) r_t - \frac{w_{t}}{\gamma_t} r_{t+1} + \gamma_t w_t \sigma^2 + 3 w_t L \E{\norm{\ee_t}^2}\,.
\end{align*}
Suppose now---we show this below---that the conditions of Lemma~\ref{lemma:dsgd_final} are satisfied. With this lemma we have:
\begin{align*}
 \frac{1}{2} \sum_{t=0}^T w_t s_t \leq \sum_{t=0}^T \left( \frac{w_t}{\gamma_t} \left(1-\frac{\mu \gamma_t}{2}\right) r_t - \frac{w_{t}}{\gamma_t} r_{t+1} + 3 \gamma_t w_t \sigma^2 \right) + \frac{2}{5}\sum_{t=0} ^T w_t s_t  \,.
\end{align*}
This can be rewritten as
\begin{align*}
 \frac{1}{W_T} \sum_{t=0}^T w_t s_t \leq  \frac{10}{W_T} \sum_{t=0}^T \left( \frac{w_t}{\gamma_t} \left(1-\frac{\mu \gamma_t}{2}\right) r_t - \frac{w_{t}}{\gamma_t} r_{t+1} + 3 \gamma_t w_t \sigma^2 \right) =: \Xi_T \,.
\end{align*}
All that is remaining is to check that the conditions of Lemma~\ref{lemma:dsgd_final} are indeed satisfied, and to derive an estimate on $\Xi_T$. For this, we discuss the two cases of the theorem separately.

For the first claim (constant stepsize, $\mu > 0$), the conditions of Lemma~\ref{lemma:dsgd_final} are easy to check, as $\gamma \leq \frac{1}{10L(\tau + M)}$ by definition. We further observe that $\bigl(1-\frac{\mu\gamma}{2}\bigr) \geq \bigl(1- \frac{\mu}{20L(\tau + M)} \bigr) \geq \bigl(1-\frac{1}{8\tau}\bigr)$ and by Example~\ref{example:slow} it follows that weights chosen as $w_t = (1-\frac{\mu \gamma}{2})^{-(t+1)}$ are $2\tau$-slow increasing (and hence also $\tau$-slow increasing). The claim follows by Lemma~\ref{lemma:dsgd_final}, Lemma~\ref{lemma:constant} and observing that the indicated sampling probability for choosing $\xx^{\rm out}$ out of $\{\xx_t\}_{t=0}^{T-1}$ matches with the chosen weights $w_t$.

For the second claim (constant stepsize, $\mu = 0$), we invoke Lemma~\ref{lemma:general} with weights $\{w_t = 1\}_{t \geq 0}$.

Finally, for the stepsizes $\gamma_t = \frac{4}{\mu (\kappa + t)}$ as used in Remark~\ref{rem:decreasingstepsize}, we observe $\gamma_t \leq \gamma_0 = \frac{4}{\mu \kappa} \leq \frac{1}{10L(\tau + M)}$, by the choice of $\kappa$. In Remark~\ref{remark:stepsizes} we have further shown that $\{\gamma_t^2\}_{t\geq 0}$ is $\tau$ slow decreasing, as $\kappa \geq 8\tau$ (and $\tau \geq 1$). Furthermore, in Example~\ref{example:slow} we have show that the weights $\{w_t = \kappa + t\}_{t \geq 0}$ are $2\tau$-slow increasing for $\kappa \geq 16\tau$ (and hence they are also $\tau$-slow increasing).
Thus, the conditions of Lemma~\ref{lemma:dsgd_final} are indeed satisfied and the technical Lemma~\ref{lemma:decreasing} provides the claimed upper bound on $\Xi_T$.

\paragraph{Arbitrary smooth non-convex functions (claim 3).}
For the non-convex case, Lemma~\ref{lemma:main-nonconvex} gives us the progress of one step. Using notation $r_t := 4\E (f(\tilde \xx_{t}) - f^\star)$, $s_t := \E \norm{\nabla f(\xx_t)}^2$, $c = 4L \sigma^2$, and $w_t = 1$ we have
\begin{align*}
    \frac{1}{4 W_T} \sum_{t=0}^T w_t s_t &\stackrel{\eqref{eq:main-nonconvex}}{\leq} \frac{1}{W_T}\sum_{t=0}^T w_t\left(\frac{r_t}{4\gamma_t} - \frac{r_{t+1}}{4\gamma_t} + \frac{\gamma_t c}{8}\right) + \frac{L^2}{2W_T} \sum_{t=0}^T w_t\E \norm{\xx_t - \tilde \xx_t}^2\\
    &\stackrel{\eqref{eq:dsgd_bound2}}{\leq} \frac{1}{W_T}\sum_{t=0}^T w_t\left(\frac{r_t}{4\gamma_t} - \frac{r_{t+1}}{4\gamma_t} + \frac{\gamma_t c}{8}\right) + \frac{L^2}{2W_T} \sum_{t=0}^T\left( \frac{1}{15 L^2 }w_t s_t + \frac{w_t\gamma_t c}{4 L^2}\right)\,.
\end{align*}
The above equation can be simplified as: 
\[
    \frac{1}{5 W_T} \sum_{t=0}^T w_t s_t \leq \frac{1}{W_T}\sum_{t=0}^T w_t\left(\frac{r_t}{4\gamma_t} - \frac{r_{t+1}}{4\gamma_t} + \frac{\gamma_t c}{4}\right)\,.
\]
We can now invoke Lemma~\ref{lemma:general} with weights $\{w_t = 1\}_{t \geq 0}$ to finish the proof.
\end{proof}

\section{Error Compensated SGD with Arbitrary Compressors}
\label{sec:efsgd}
In this section we analyze SGD with error-feedback (or error-compensation) and generalize and improve the results of~\cite{Stich2018:sparsified, Karimireddy2019:error} through the more refined analysis developed here. This method is of particular importance in distributed optimization to reduce communication costs, but we consider only the single worker case here.

\noindent We consider algorithms  that take the following  form in the~\eqref{def:EFsgd} framework:
\begin{align}
 \vv_t &:= \cC(\ee_{t} + \gamma_{t} \gg_{t} )\,, & \ee_{t+1} := \ee_{t} + \gamma_{t} \gg_{t} - \vv_t\,, \label{eq:sparse_error}
\end{align} 
where here $\cC$ denotes a $\delta$-\emph{approximate compressor} or $\delta$-compressor for short.
\begin{definition}[$\delta$-approximate compressor]
A random operator $\cC \colon \R^d \to \R^d$ that satisifes for a parameter $\delta > 0$:
\begin{align}
 \EE{\cC} {\norm{\xx-\cC(\xx)}^2} \leq (1-\delta)\norm{\xx}^2\,,\qquad \forall \xx \in \R^d. \label{eq:compressor}
\end{align}
\end{definition}

\noindent In contrast to~\eqref{eq:dSGD} studied in the previous section, we do not precisely know the explicit structure of $\ee_t$ here (e.g.\ if it can be written as the sum of $\tau$ stochastic gradient estimators). Instead, for $\delta$\nobreakdash-approximate compressors, we only know an upper bound on the squared norm $\norm{\ee_t}^2$. The notion~\eqref{eq:sparse_error} comprises a much richer class of algorithms. 
For illustration---and to highlight the connection the previous section---consider the compressor $\cC_\tau$, defined for a parameter $\tau \geq 1$ as:
\begin{align*}
 \cC_\tau (\xx) = \begin{cases} \0_d\,, &\text{with probability }1-\frac{1}{\tau}\,,\\
 \xx\,, &\text{with probability }\frac{1}{\tau}\,. \end{cases}
\end{align*}
This operator is a $\delta = \frac{1}{\tau}$ compressor. Moreover, $\ee_t$ can  be written as the sum of (in expectation) $\tau$  stochastic gradients. Thus, we would expect the algorithm~\eqref{eq:dSGD} to behave similarly as algorithm \eqref{eq:sparse_error} with a $\delta = \frac{1}{\tau}$ compressor. Indeed, this intuition is true and Theorem~\ref{thm:sparsified} can be seen as a generalization of Theorem~\ref{thm:delay} with $\tau$ replaced by $\frac{2}{\delta}$. The following operator
\begin{align*}
 \bigl[\cS_\delta (\xx)\bigr]_i &= \begin{cases} 0\,, &\text{with probability }1-\delta\,,\\
 [\xx]_i\,, &\text{with probability }\delta \,,\end{cases} 
\end{align*}
where $[\xx]_i:=\lin{\xx,\ee_i}$, is also a $\delta$-approximate compressor. This shows that this notion not only comprises delayed gradients, but also delayed (atomic) block-coordinates updates. We refer e.g.\ to \cite{Alistarh2018:topk,Stich2018:sparsified,
cordonnier2018:sparse,Karimireddy2019:error} for the discussion of key examples, including sparsification and quantization (that we have not discussed here).
However, it is important to note that whilst our framework~\eqref{def:EFsgd} covers distributed SGD implementations in general (see for instance Section~\ref{sec:minibatch}), we here analyze error compensated SGD only for the special case of a single-machine implementation and our results do no apply to a fully distributed optimization setting as for instance considered in~\cite{cordonnier2018:sparse}. Fully compressed communication in such a distributed setting involves each machine maintaining a separate error vector, and hence does not directly fit in our framework~\eqref{def:EFsgd}.

\begin{theorem}
\label{thm:sparsified}
Let $\{\xx_t\}_{t \geq 0}$ denote the iterates of the error compensated stochastic gradient descent~\eqref{eq:sparse_error} 
with constant stepsize $\{\gamma_t = \gamma\}_{t \geq 0}$ and
with a $\delta$-approximate compressor on a differentiable function $f \colon \R^d \to \R$ under assumptions Assumptions~\ref{ass:lsmooth} and~\ref{ass:noise}. Then, if $f$ 
\begin{itemize}[nosep,leftmargin=12pt,itemsep=2pt]
 \item satisfies Assumption~\ref{ass:strong} for $\mu > 0$, then there exists a stepsize $\gamma \leq \frac{1}{10L(2/\delta + M)}$ (chosen as in Lemma~\ref{lemma:constant})
such that
 \begin{align*}
  \E f(\xx^{\rm out})-f^\star = \tilde \cO \left( L(1/\delta + M) \norm{\xx_0-\xx^\star}^2 \exp \left[- \frac{\mu T}{10L(2/\delta + M)} \right] + \frac{\sigma^2}{\mu T} \right)\,,
 \end{align*}
 where the output $\xx^{\rm out} \in \{\xx_t\}_{t=0}^{T-1}$ is chosen to be $\xx_t$ with probability proportional to 
 $(1-\mu \gamma/2)^{-t}$. 
 \item satisfies Assumption~\ref{ass:strong} for $\mu = 0$, then there exists a stepsize $\gamma \leq \frac{1}{10L(2/\delta + M)}$ (chosen as in Lemma~\ref{lemma:general}) such that
 \begin{align*}
 \E f(\xx^{\rm out})-f^\star = \cO \left(  \frac{L(1/\delta + M) \norm{\xx_0-\xx^\star}^2 }{ T } + \frac{\sigma \norm{\xx_0-\xx^\star} }{\sqrt T}  \right)\,,
 \end{align*}
  where the output $\xx^{\rm out} \in \{\xx_t\}_{t=0}^{T-1}$ is chosen uniformly at random from the iterates $\{\xx_t\}_{t=0}^{T-1}$.
 \item is an arbitrary non-convex function, then there exists a stepsize $\gamma \leq \frac{1}{10L(1/\delta + M)}$ (chosen as in Lemma~\ref{lemma:general}), such that
  \begin{align*}
 \E \norm{\nabla f(\xx^{\rm out})}^2 = \cO \left(  \frac{L(1/\delta + M) (f(\xx_0) - f^\star) }{ T } + \sigma \sqrt{\frac{L(f(\xx_0) - f^\star)}{T}}  \right)\,.
 \end{align*}
 where the output $\xx^{\rm out} \in \{\xx_t\}_{t=0}^{T-1}$ is chosen uniformly at random from the iterates $\{\xx_t\}_{t=0}^{T-1}$.
\end{itemize}
\end{theorem}
In analogy to Theorem~\ref{thm:delay}, this result shows that the stochastic terms (the ones depending on $\sigma$) in the rate are not affected by the $\delta$ parameter.
\citet{Stich2018:sparsified} proved under the bounded gradient assumption for strongly convex functions an upper bound of $\cO \bigl(\smash{\frac{\mu}{\delta^3 T^3} \norm{\xx_0- \xx^\star}^2 + \frac{ L G^2}{\mu^2 \delta^2 T^2}   + \frac{G^2}{\mu T} }\bigr)$ 
which implies an iteration complexity of $\Omega\bigl(\smash{\frac{G^2}{\mu \epsilon} + \frac{G\sqrt{L}}{\mu \delta \sqrt{\epsilon}} } \bigr)$. Our second result implies iteration complexity $\tilde \cO \bigl( \smash{ \frac{\sigma^2}{\mu \epsilon} + \frac{L}{\mu \delta} }\bigr)$ which is strictly better, as $\sigma^2 \leq G^2$ in general. Moreover, as in the previous section we notice that the impact of the compression becomes negligible if $\epsilon$ is smaller than $\tilde \cO \bigl(\frac{\delta \sigma^2}{L} \bigr)$ or $T = \tilde \Omega \bigl(\frac{L}{\mu \delta} \bigr)$. As only for $T=\Omega\bigl(\frac{1}{\delta}\bigr)$ the first gradient is fully received (in expectation), this is again a very mild condition that implies \emph{compression for free}, i.e.\ without increasing the iteration complexity. By the same arguments as in the previous section and considering the operator $\cC_\tau$ from above, we see that the linear dependency on $\frac{1}{\delta}$ cannot further be improved in general. The general quasi convex setting ($\mu = 0$) was not studied in previous work.

Under the bounded gradient assumption, \citet{Karimireddy2019:error} prove that that the gradient norm converges at a rate of $\cO\bigl(G\sqrt{\frac{(f(\xx_0) - f^\star)}{T}} + \frac{L G^2}{\delta^2 T}\bigr)$. This translates to an iteration complexity of $\cO\bigl( \smash{\frac{L(f(\xx_0) - f^\star) }{\delta^2 \epsilon } + \frac{LG^2(f(\xx_0) - f^\star)}{\epsilon^2} } \bigr)$. In contrast, our rates give an iteration complexity of $\cO\bigl(\frac{L(f(\xx_0) - f^\star) }{\delta \epsilon } + \frac{L\sigma^2(f(\xx_0) - f^\star)}{\epsilon^2}  \bigr)$. Thus we improve in two regards: first our rates replace the second moment bound with a variance bound, and second we obtain a linear dependence on $\delta$ instead of the quadratic dependence in \cite{Karimireddy2019:error}.

Our improved rates are partially due by relaxing the bounded gradient assumption, and also a more careful bound on the error term.

\subsection{Proof of Theorem~\ref{thm:sparsified}}

We follow a similar structure as in the proof of Theorem~\ref{thm:delay}. We first derive an an upper bound on $\E \norm{\ee_t}^2$.

\begin{lemma} \label{lemma:sparse_final}
Let $\ee_t$ be as in~\eqref{eq:sparse_error} for a $\delta$-approximate compressor $\cC$ and stepsizes $\{\gamma_t\}_{t \geq 0}$ with $\gamma_{t+1} \leq \frac{1}{10 L (2/\delta+M)}$, $\forall t \geq 0$ and $\{\gamma_t^2\}_{t \geq 0}$ $\frac{2}{\delta}$-slow decaying. Then
\begin{align}
 \Eb{ 3L \norm{\ee_{t+1}}^2 } \leq \frac{\delta}{64 L}\sum_{i=0}^t \left(1-\frac{\delta}{4}\right)^{t-i} \left(\E \norm{\nabla f(\xx_{t-i})}^2 \right) +  \gamma_t  \sigma^2\,. \label{eq:sparse_bound1}
\end{align}
Furthermore, for any $\frac{4}{\delta}$-slow increasing non-negative sequence $\{w_t\}_{t \geq 0}$ it holds:
\begin{align*}
    3 L\sum_{t=0}^T w_t \E \norm{\ee_t}^2 \leq \frac{1}{8L} \sum_{t=0}^T w_t \left(\E \norm{\nabla f(\xx_{t-i})}^2 \right) +  \sigma^2 \sum_{t=0}^T w_t \gamma_t\,. %
\end{align*}
\end{lemma} 

\begin{proof}
We start with the first claim. By definition of the error sequence
\begin{align*}
\EE{\cC}{ \norm{\ee_{t+1}}^2 } &\stackrel{\eqref{eq:sparse_error}}{=} \EE{\cC}{\norm{\ee_{t} + \gamma_t \gg_t - \cC(\ee_{t} + \gamma_t \gg_t)}^2} \stackrel{\eqref{eq:compressor}}{\leq} (1-\delta) \norm{\ee_{t} + \gamma_t \gg_t}^2\,.
\end{align*}
We further simplify with Assumption~\ref{ass:noise}:
\begin{align*}
 \EE{\bxi_t}{ \norm{\ee_{t+1}}^2} &\leq (1-\delta) \EE{\bxi_t}{ \norm{\ee_t + \gamma_t (\nabla f(\xx_t) + \bxi_t)}^2} \\
 &\stackrel{\eqref{def:noise-general}}{=} (1-\delta) \norm{\ee_t + \gamma_t \nabla f(\xx_t)}^2 + \gamma_t^2 (1-\delta) \EE{\bxi_t}{\norm{\bxi_t}^2} \\
 &\stackrel{\eqref{def:noise-general}}{\leq} (1-\delta) \norm{\ee_t + \gamma_t \nabla f(\xx_t)}^2 + \gamma_t^2 (1-\delta) \left(M\norm{\nabla f(\xx_t)}^2 + \sigma^2 \right) \,. 
\end{align*}
With $\norm{\aa + \bb}^2 \leq (1+\beta)\norm{\aa}^2 + (1+\beta^{-1})\norm{\bb}^2$, for any $\beta \geq 0$, we continue:
\begin{align*}
 \EE{\bxi_t}{ \norm{\ee_{t+1}}^2} & \leq (1-\delta)(1+\beta) \norm{\ee_t}^2 
+ \gamma_t^2 (1-\delta)(1+1/\beta) \norm{\nabla f(\xx_t)}^2  \\ & \qquad  + \gamma_t^2 (1-\delta) \left(M\norm{\nabla f(\xx_t)}^2 + \sigma^2 \right) \\
& \stackrel{\eqref{def:lsmooth}}{\leq} (1-\delta)(1+\beta) \norm{\ee_t}^2 + \gamma_t^2 ( 1+1/\beta + M) \norm{\nabla f(\xx_t)}^2 +   \gamma_t^2 \sigma^2   \,,
\end{align*}
where we used the smoothness Assumption~\ref{ass:lsmooth} and dropped---for convenience---one superfluous $(1-\delta)$ factor in the last inequality. By unrolling the recurrence, and picking $\beta = \frac{\delta}{2(1-\delta)}$, such that $(1+1/\beta) = (2-\delta)/\delta \leq 2/\delta$, and $(1-\delta)(1+\beta) \leq (1-\delta/2)$  we find
\begin{align*}
\E{\norm{\ee_{t+1}}^2} & \leq \sum_{i=0}^t \gamma_i^2 \left[(1-\delta)(1+\beta)  \right]^{t-i}  \left(( 1+1/\beta + M) \E \norm{\nabla f(\xx_i)}^2 +  \sigma^2\right) \\
& \leq \sum_{i=0}^t  \gamma_i^2 \left(1-\frac{\delta}{2}\right)^{t-i}  \left( \left( \frac{2}{\delta} + M \right) \E \norm{\nabla f(\xx_i)}^2 +  \sigma^2 \right)\,.
\end{align*} 
For $\frac{2}{\delta}$-slow decreasing $\{\gamma_t^2\}_{t \geq 0}$ it holds  $\gamma_{i}^2\leq \gamma_t^2 \bigl(1+\frac{\delta}{4} \bigr)^{t-i}$. As $(1-\delta/2)(1+\delta/4)\leq (1-\delta/4)$, we continue:
\begin{align*}
\E{\norm{\ee_{t+1}}^2} &\leq \sum_{i=0}^t \gamma_t^2 \left(1+\frac{\delta}{4}\right)^{t-i} \left(1-\frac{\delta}{2}\right)^{t-i}  \left( \frac{2}{\delta} + M \right) \norm{\nabla f(\xx_i)}^2  + \gamma_t^2 \sum_{i=0}^t  \left(1+\frac{\delta}{4}\right)^{t-i}  \sigma^2 \\
&\leq \gamma_t^2 \sum_{i=0}^t \left(1-\frac{\delta}{4}\right)^{t-i}  \left( \frac{2}{\delta} + M \right)\E \norm{\nabla f(\xx_i)}^2  + \gamma_t^2 \frac{4 \sigma^2}{\delta}\,.
\end{align*}
By observing that the choice of the stepsize $\gamma_t \leq \frac{1}{10 L(2/\delta+M)}$ implies $\bigl( 3L\cdot (2/\delta+M)\gamma_t^2\bigr)\leq \frac{\delta}{64L}  $ and $\bigl(3L \cdot 4/\delta \gamma_t\bigr) \leq 1$ we prove the first claim.

For the second claim, we observe that for $\frac{4}{\delta}$-slow increasing $\{w_t\}_{t \geq 0}$ we have $w_t \leq w_{t-i} \bigl(1+\frac{\delta}{8}\bigr)^i$. Hence,
\begin{align*}
 3L \sum_{t=0}^T w_t \E{\norm{\ee_{t}}^2} &\stackrel{\eqref{eq:sparse_bound1}}{\leq}  \frac{\delta}{64L}  \sum_{t=0}^T \sum_{i=0}^{t-1} w_t \left(1-\frac{\delta}{4}\right)^{t-i}  \E \norm{\nabla f(\xx_i)}^2 +  w_t  \gamma_{t-1}  \sigma^2 \\
 &\leq  \frac{ \delta}{64L} \sum_{t=0}^T  \sum_{i=0}^{t-1} w_{i} \left(1+\frac{\delta}{8}\right)^{t-i} \left(1-\frac{\delta}{4}\right)^{t-i} \E \norm{\nabla f(\xx_i)}^2 + \sigma^2 \sum_{t=0}^T w_t \gamma_t \\
 &\leq \frac{\delta}{64L}   \sum_{t=0}^T \sum_{i=0}^{t-1} w_{i} \left(1-\frac{\delta}{8}\right)^{t-i}  \E \norm{\nabla f(\xx_i)}^2  + \sigma^2 \sum_{t=0}^T w_t \gamma_t \\
 &\leq \frac{\delta}{64L}\sum_{t=0}^T w_t  \E \norm{\nabla f(\xx_t)}^2 \sum_{i=0}^{\infty} \left(1-\frac{\delta}{8}\right)^{i} + \sigma^2 \sum_{t=0}^T w_t \gamma_t\,.
\end{align*}
Observing $\sum_{i=0}^\infty (1-\delta/8)^i \leq \frac{8}{\delta}$ concludes the proof.
\end{proof}

\noindent With this lemma we can now complete the proof analogously to the proof of Theorem~\ref{thm:delay}, as we outline next.
\begin{proof}[Proof of Theorem~\ref{thm:sparsified}]
We observe that by setting $\tau := \frac{2}{\delta}$ in Theorem~\ref{thm:sparsified} and Lemma~\ref{lemma:sparse_final}, respectively, we fall back in the setting of Theorem~\ref{thm:delay} and Lemma~\ref{lemma:dsgd_final}. Thus the proof follows along the same lines and we will not repeat it here. There is only one small caveat: Lemma~\ref{lemma:sparse_final} requires $2\tau$-slow increasing weights, instead of $\tau$-slow increasing weights as in Lemma~\ref{lemma:dsgd_final}. However, it can easily be checked that this condition is satisfied (see also the remarks in the proof of Theorem~\ref{thm:delay}).
\end{proof}

\section{Local SGD with Infrequent Communication}
\label{sec:localsgd}
In this section we analyze local SGD (parallel SGD) with the developed tools. We follow closely~\cite{Stich2018:local} and provide an analysis without the bounded gradient assumption.

The local SGD algorithm evolves $K \geq 1$ sequences  $\bigl\{\xx_t^k\bigr\}_{t \geq 0}^{k \in [K]}$ in parallel, for an integer $K$. The sequences are synchronized every $\tau \geq 1$ iterations, in the following way:
\begin{align}
 \xx_{t+1}^k = \begin{cases} \frac{1}{K}\sum_{k=1}^K \bigl(\xx_t^k - \gamma_t \gg_{t}^k\bigr) &\text{if } \tau  \vert (t+1), \\
 \xx_{t}^k - \gamma_t \gg_t^k &\text{otherwise}. \end{cases} \label{def:local}
\end{align}
Extending our notion in a natural way, we denote by $\gg_t^k$ the gradient oracle on worker $k$ at iteration $t$, with $\gg_t^k = \nabla f(\xx_k^t)+ \bxi_t^k$ and we assume Assumption~\ref{ass:noise} for the noise $\bxi_t^k$. 

\noindent Using a similar proof technique as in the previous sections, we can derive the following complexity estimates.
\begin{theorem}
\label{thm:local}
Let $\bigl\{\xx_t^k\bigr\}_{t \geq 0}^{k \in [K]}$ denote the iterates of local SGD~\eqref{def:local} 
with constant stepsize $\{\gamma_t = \gamma\}_{t \geq 0}$
on a differentiable function $f \colon \R^d \to \R$ under assumptions Assumptions~\ref{ass:lsmooth} and~\ref{ass:noise}. Then, if $f$ 
\begin{itemize}[nosep,leftmargin=12pt,itemsep=2pt]
 \item satisfies Assumption~\ref{ass:strong} for $\mu > 0$, then there exists a stepsize $\gamma \leq \frac{1}{10L(\tau K + M)}$ (chosen as in Lemma~\ref{lemma:constant})
such that
 \begin{align*}
  \E f(\xx^{\rm out})-f^\star = \tilde \cO \left( L(\tau K + M) \norm{\xx_0-\xx^\star}^2 \exp \left[- \frac{\mu T}{10L(\tau K +M)} \right] + \frac{\sigma^2}{\mu K T} \right)\,,
 \end{align*}
 where the output $\xx^{\rm out} \in \{\xx_t^k\}_{t-1 \in [T]}^{k \in [K]}$ is chosen to be $\xx_t^k$ with probability proportional to 
 $(1-\mu \gamma/2)^{-t}$ (uniformly over $k \in [K]$).
 \item satisfies Assumption~\ref{ass:strong} for $\mu = 0$, then there exists a stepsize $\gamma \leq \frac{1}{10L(\tau K + M)}$ (chosen as in Lemma~\ref{lemma:general}) such that
 \begin{align*}
 \E f(\xx^{\rm out})-f^\star = \cO \left(  \frac{L(\tau K + M) \norm{\xx_0-\xx^\star}^2 }{ T } + \frac{\sigma \norm{\xx_0-\xx^\star} }{\sqrt {K T}}  \right)\,,
 \end{align*}
 where the output $\xx^{\rm out} \in \{\xx_t^k\}_{t-1 \in [T]}^{k \in [K]}$ is chosen uniformly at random from the iterates $\{\xx_t^k\}_{t-1 \in [T]}^{k \in [K]}$.
 \item is an arbitrary non-convex function, then there exists a stepsize $\gamma \leq \frac{1}{10L(\tau K + M)}$ (chosen as in Lemma~\ref{lemma:general}), such that
  \begin{align*}
   \E \norm{\nabla f(\xx^{\rm out})}^2 = \cO \left(  \frac{L(\tau K + M) (f(\xx_0) - f^\star) }{ T } + \sigma \sqrt{\frac{L(f(\xx_0) - f^\star)}{K T}}  \right)\,.
 \end{align*}
 where the output $\xx^{\rm out} \in \{\xx_t^k\}_{t-1 \in [T]}^{k \in [K]}$ is chosen uniformly at random from the iterates $\{\xx_t^k\}_{t-1 \in [T]}^{k \in [K]}$.
\end{itemize}
\end{theorem}

\citet{Stich2018:local} shows that under the bounded gradient assumption, local SGD can converge on strongly convex functions at the optimal statistical rate $\cO \bigl( \frac{\sigma^2}{\mu KT}\bigr)$ given $T = \Omega(\frac{L\tau^2K}{\mu})$. Here we study a more general setting and prove an iteration complexity of $\tilde \cO \bigl(\frac{\sigma^2}{\mu K\epsilon} + \frac{L(\tau K + M)}{\mu} \bigr)$, i.e.\ the same $\epsilon$ dependency for the statistical term, but much milder dependency on the the optimization term. Local SGD achieves optimal $\cO \bigl( \frac{\sigma^2}{\mu K \epsilon}\bigr)$ iteration complexity  
if $T =\tilde \Omega\bigl(\frac{L\tau K}{\mu} \bigr)$. This improves the previously known bound of $T = \Omega(\tau^2K)$ with quadratic dependence on $\tau$  to the linear $\tilde\Omega(\tau K)$ dependence. 
We now compare these estimates to the complexity of mini-batch SGD. To make the comparison and discussion of results easier to follow, we will express all complexity bounds in this paragraph in terms of total stochastic gradient computations, i.e.\ Theorem~\ref{thm:local} gives for local SGD a complexity estimate of $\tilde \cO \bigl(\frac{\sigma^2}{\mu \epsilon} + \frac{K^2 L\tau}{\mu}\bigr)$ when $M=0$ and $\mu > 0$.
Two settings are of particular interest: (i) first, we consider mini-batch SGD with batch size $K$, that has oracle complexity $\tilde \cO \bigl(\frac{\sigma^2}{\mu\epsilon} + \frac{KL}{\mu}\bigr)$ (recalling the results from Section~\ref{sec:minibatch}). We observe that local SGD reaches the same statistical term with $\tau$-times less communication, but a worse optimization term.
Another interesting setting is the comparison to (ii) mini-batch SGD with much larger batch size $\tau K$ but with the same number of communication rounds. This algorithm has oracle complexity  $\tilde \cO \bigl(\frac{\sigma^2}{\mu \epsilon} + \frac{KL\tau}{\mu}\bigr)$. When $K=\cO(1)$, local SGD has the same complexity as mini-batch SGD under this setting. 

Besides these positive observations, the bound provided here does not seem to be optimal (especially the dependency on $K$). For instance, we see that the estimate becomes vacuous when $\tau = T$. However, we know that we should at least expect (oracle) complexity $\tilde \cO \bigl(\frac{\sigma^2}{\mu \epsilon} + \frac{KL}{\mu}\bigr)$ in this case (convergence of each individual sequence). 
This indicates that our balancing of the statistical and the optimization term is not optimal. In fact, currently the best known lower bound for the oracle complexity of local SGD is $\tilde\Omega\bigl(\frac{\sigma^2}{\mu \epsilon} + K\sqrt{\frac{L}{\mu}}\bigr)$ by \citet{Woodworth2018:graph}. Even ignoring the dependence on the condition number (which is improvable via acceleration), our rates are off by $\cO(\tau K)$.

\citet{Dieuleveut2019:local} present a very detailed analysis of local SGD on strongly convex and smooth functions, not only considering the convergence in function value as we do here, but by providing more refined analysis on the behavior of the iterates, following~\cite{Moulines2011:nonasymptotic,
Dieuleveut2017:harder}. However, they consider only polynomial (Polyak-Ruppert) averaging and do thus not recover the exponential decaying dependency on the initial bias in the complexity estimates. Combining their estimates with the exponential averaging might be an interesting future direction.

\subsection{Proof of Theorem~\ref{thm:local}}
Analogously to the virtual iterate $\tilde \xx_t$ in the previous proofs, we define here a (virtual) averaged iterate $\tilde \xx_t$, by setting $\tilde \xx_0 = \xx_0$ and
\begin{align}
 \tilde \xx_{t+1} := \tilde \xx_t -  \frac{\gamma_t}{K}\sum_{k=1}^K  \gg_t^{k}\,, \qquad \forall t \geq 0. \label{def:bar} %
\end{align}
\noindent 
We need a slightly adapted version of Lemmas~\ref{lemma:main} and~\ref{lemma:main-nonconvex}. 
\begin{lemma}
\label{lemma:main_local}
Let $\bigl\{\xx_t^k\bigr\}_{t \geq 0}^{k \in [K]}$ be defined as in~\eqref{def:local} with gradient oracles $\bigl\{\gg_t^k\bigr\}_{t \geq 0}^{k \in [K]}$ and objective function $f \colon \R^d  \to \R$ as in Assumptions~\ref{ass:strong}--\ref{ass:noise}. If $\gamma_t \leq \frac{K}{4L(K+M)}$, $\forall t \geq 0$, then for $\{\tilde \xx_t\}_{t \geq 0}$ defined as in~\eqref{def:bar},
\begin{align*}
 \E{ \norm{\tilde \xx_{t+1} - \xx^\star}^2 } &\leq
 \left(1-\frac{\mu \gamma_t}{2}\right) \E{\norm{\tilde \xx_{t} - \xx^\star}^2} 
 - \frac{\gamma_t}{2K}\sum_{k=1}^K  \E{(f(\xx_t^k)- f^\star)} + \frac{\gamma_t^2 \sigma^2}{K}
 + \frac{3 L \gamma_t}{K} \sum_{k=1}^K   \E{\norm{\xx_t^k - \tilde \xx_t}^2}\,.
\end{align*}
\end{lemma}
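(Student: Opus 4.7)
The plan is to mirror the proof of Lemma~\ref{lemma:main} almost verbatim, while exploiting two ``parallel'' ingredients specific to local SGD: first, the identity $\tilde\xx_t = \frac{1}{K}\sum_{k=1}^K \xx_t^k$ (which follows because averaging preserves the mean, so synchronization steps do not alter the recursion~\eqref{def:bar}); and second, the fact that the noises $\bxi_t^1,\dots,\bxi_t^K$ are independent across workers, which gives a variance reduction of a factor $1/K$ in the stochastic term. Aside from these two points, the remaining arithmetic is essentially the one-worker proof applied per-coordinate in $k$ and averaged.

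Concretely, I would start from \eqref{def:bar} and expand
\begin{align*}
  \norm{\tilde\xx_{t+1}-\xx^\star}^2
  = \norm{\tilde\xx_t-\xx^\star}^2
  - \frac{2\gamma_t}{K}\sum_{k=1}^K \lin{\gg_t^k,\xx_t^k-\xx^\star}
  - \frac{2\gamma_t}{K}\sum_{k=1}^K \lin{\gg_t^k,\tilde\xx_t-\xx_t^k}
  + \gamma_t^2 \Bigl\|\tfrac{1}{K}\sum_{k=1}^K \gg_t^k\Bigr\|^2,
\end{align*}
where I have split the cross term by inserting $\xx_t^k$. Then I take conditional expectation over $\{\bxi_t^k\}_{k=1}^K$. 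Independence across workers gives $\E\|\tfrac{1}{K}\sum_k \gg_t^k\|^2 = \|\tfrac{1}{K}\sum_k \nabla f(\xx_t^k)\|^2 + \tfrac{1}{K^2}\sum_k \E\|\bxi_t^k\|^2$, and combining Jensen's inequality, Assumption~\ref{ass:noise}, and the smoothness bound \eqref{def:lsmooth} yields
\begin{align*}
  \gamma_t^2\,\E\Bigl\|\tfrac{1}{K}\sum_{k=1}^K \gg_t^k\Bigr\|^2
  \leq \frac{2L(1+M/K)\gamma_t^2}{K}\sum_{k=1}^K (f(\xx_t^k)-f^\star) + \frac{\gamma_t^2\sigma^2}{K}.
\end{align*}
This is where the factor $1/K$ in the final $\sigma^2$ contribution comes from, and where the stepsize condition $\gamma_t \leq \frac{K}{4L(K+M)}$ will be used to absorb the $(f(\xx_t^k)-f^\star)$ multiplier later.

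For the two inner-product terms, I would apply Assumption~\ref{ass:strong} per worker to obtain $-\lin{\nabla f(\xx_t^k),\xx_t^k-\xx^\star} \leq -(f(\xx_t^k)-f^\star) - \tfrac{\mu}{2}\norm{\xx_t^k-\xx^\star}^2$, and then use $\norm{\xx_t^k-\xx^\star}^2 \geq \tfrac{1}{2}\norm{\tilde\xx_t-\xx^\star}^2 - \norm{\xx_t^k-\tilde\xx_t}^2$ (from $\norm{\aa+\bb}^2 \leq (1+\beta)\norm{\aa}^2 + (1+\beta^{-1})\norm{\bb}^2$ with $\beta=1$) to relate the per-worker distance to the virtual one. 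For the $\lin{\nabla f(\xx_t^k),\tilde\xx_t-\xx_t^k}$ term, Young's inequality with weight $1/(2L)$ together with \eqref{def:lsmooth} gives
\begin{align*}
  \frac{2\gamma_t}{K}\sum_{k=1}^K \lin{\nabla f(\xx_t^k),\xx_t^k-\tilde\xx_t}
  \leq \frac{\gamma_t}{K}\sum_{k=1}^K (f(\xx_t^k)-f^\star) + \frac{2L\gamma_t}{K}\sum_{k=1}^K \norm{\xx_t^k-\tilde\xx_t}^2.
\end{align*}
Collecting everything, the coefficient in front of $\tfrac{\gamma_t}{K}\sum_k (f(\xx_t^k)-f^\star)$ becomes $-2+1+2L(1+M/K)\gamma_t$, which is at most $-1/2$ under the assumed stepsize bound, and the coefficient in front of $\tfrac{\gamma_t}{K}\sum_k \norm{\xx_t^k-\tilde\xx_t}^2$ is $2L+\mu \leq 3L$ since $L \geq \mu$. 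Taking full expectation then yields the stated inequality.

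The only mildly delicate point is the variance-reduction step for the stochastic noise: one must use the per-worker independence of $\bxi_t^k$ (an application of Lemma~\ref{lem:independent} with $\tau = 1$, or a direct computation) rather than the naive Jensen bound, because otherwise the $1/K$ factor on the $\sigma^2$ term is lost and the final rates in Theorem~\ref{thm:local} would degrade. Everything else is a bookkeeping exercise essentially identical to the proof of Lemma~\ref{lemma:main}.
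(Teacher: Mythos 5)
Your proposal is correct and follows essentially the same route as the paper's proof: expand the square from~\eqref{def:bar}, split the cross term via $\xx_t^k$, use the independence of $\bxi_t^1,\dots,\bxi_t^K$ (Lemma~\ref{lem:independent}) to get the $\sigma^2/K$ variance term, and apply quasi-convexity, Young's inequality with \eqref{def:lsmooth}, and the $\norm{\aa+\bb}^2$ bound per worker, with matching coefficient bookkeeping. (The identity $\tilde\xx_t = \frac{1}{K}\sum_k \xx_t^k$ you mention is true but not actually needed for this lemma, which only uses the recursion~\eqref{def:bar}.)
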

\begin{lemma}
\label{lemma:main_local-nonconvex}
Let $\bigl\{\xx_t^k\bigr\}_{t \geq 0}^{k \in [K]}$ be defined as in~\eqref{def:local} with gradient oracles $\bigl\{\gg_t^k\bigr\}_{k \in [K], t \geq 0}$ and a smooth possibly non-convex function $f \colon \R^d  \to \R$ satisfying Assumptions~\ref{ass:lsmooth} and~\ref{ass:noise}. If $\gamma_t \leq \frac{K}{2L(K+M)}$, $\forall t \geq 0$, then for $\{\tilde \xx_t\}_{t \geq 0}$ defined as in~\eqref{def:bar},
\begin{align*}
 \E f(\tilde \xx_{t+1}) &\leq
 \E f(\tilde \xx_t)
 - \frac{\gamma_t}{4K}\sum_{k=1}^K  \E \norm{\nabla f(\xx_t^k)}^2 + \frac{\gamma_t^2 L \sigma^2}{2 K}
 + \frac{L^2 \gamma_t}{2K} \sum_{k=1}^K   \E{\norm{\xx_t^k - \tilde \xx_t}^2}\,.
\end{align*}
\end{lemma}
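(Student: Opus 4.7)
The plan is to mirror the single-worker proof of Lemma~\ref{lemma:main-nonconvex}, applied to the virtual average $\tilde \xx_t$ whose update~\eqref{def:bar} reads $\tilde \xx_{t+1} = \tilde \xx_t - \tfrac{\gamma_t}{K}\sum_k \gg_t^k$. The key new ingredient, which gives the target $\sigma^2/K$ instead of $\sigma^2$, is to exploit the conditional independence of the noise terms $\{\bxi_t^k\}_{k=1}^K$ across workers.

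First, I would apply the smoothness upper bound~\eqref{eq:quad-smooth} to $f(\tilde \xx_{t+1})$, yielding
\[
f(\tilde \xx_{t+1}) \leq f(\tilde \xx_t) - \gamma_t \Bigl\langle \nabla f(\tilde \xx_t),\, \tfrac{1}{K}\sum_{k=1}^K \gg_t^k \Bigr\rangle + \tfrac{\gamma_t^2 L}{2}\Bigl\lVert \tfrac{1}{K}\sum_{k=1}^K \gg_t^k \Bigr\rVert^2.
\]
Next, take expectation over $\{\bxi_t^k\}_{k=1}^K$ conditionally on $\{\xx_t^k\}_{k=1}^K$. For the squared-norm term, using $\E\bxi_t^k = \0$ and independence across $k$:
\[
\E\Bigl\lVert \tfrac{1}{K}\sum_k \gg_t^k \Bigr\rVert^2 = \Bigl\lVert \tfrac{1}{K}\sum_k \nabla f(\xx_t^k) \Bigr\rVert^2 + \tfrac{1}{K^2}\sum_k \E \lVert \bxi_t^k\rVert^2 \leq \tfrac{K+M}{K^2}\sum_k \lVert \nabla f(\xx_t^k)\rVert^2 + \tfrac{\sigma^2}{K},
\]
where the first term is bounded by Jensen and the second by~\eqref{def:noise-general}. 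This is exactly the step that produces the $1/K$ variance reduction.

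Second, I would handle the inner-product term by the splitting used in Lemma~\ref{lemma:main-nonconvex}, worker by worker:
\[
-\langle \nabla f(\tilde \xx_t),\nabla f(\xx_t^k)\rangle = -\lVert \nabla f(\xx_t^k)\rVert^2 + \langle \nabla f(\xx_t^k) - \nabla f(\tilde \xx_t),\nabla f(\xx_t^k)\rangle,
\]
then use $\langle \aa,\bb\rangle \leq \tfrac12 \lVert \aa\rVert^2 + \tfrac12 \lVert \bb\rVert^2$ together with~\eqref{def:lgradlipschitz} to bound the cross term by $\tfrac{L^2}{2}\lVert \xx_t^k - \tilde \xx_t\rVert^2 + \tfrac12 \lVert \nabla f(\xx_t^k)\rVert^2$. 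Averaging over $k$ and assembling, the coefficient of $\tfrac{1}{K}\sum_k \lVert \nabla f(\xx_t^k)\rVert^2$ becomes
\[
-\gamma_t\Bigl(1 - \tfrac{\gamma_t L(K+M)}{2K}\Bigr) + \tfrac{\gamma_t}{2},
\]
and the stepsize assumption $\gamma_t \leq \tfrac{K}{2L(K+M)}$ ensures the first factor is $\geq \tfrac34$, leaving a net coefficient of $-\tfrac{\gamma_t}{4}$ after combining with $+\tfrac{\gamma_t}{2}$. Taking total expectation and collecting terms yields the claim.

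The main obstacle is bookkeeping: one has to track the $1/K$ factors from averaging, the $M/K$ factor that comes out of the per-worker multiplicative noise, and the $1/K^2$ versus $1/K$ from squared-norm versus variance terms, and verify that the stepsize bound $\gamma_t \leq \tfrac{K}{2L(K+M)}$ is the correct threshold that makes the combined coefficient of the gradient-norm term land on $-\tfrac{\gamma_t}{4K}$. Nothing here relies on convexity, so Assumption~\ref{ass:strong} is not needed; only Assumptions~\ref{ass:lsmooth} and~\ref{ass:noise} (specifically~\eqref{def:noise-general}, since the noise must be controlled uniformly, not just at quasi-convex points).
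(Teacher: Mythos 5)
Your proposal is correct and follows essentially the same route as the paper's proof: smoothness applied to the averaged virtual iterate, conditional independence of the per-worker noise (Lemma~\ref{lem:independent}) to obtain the $\sigma^2/K$ variance term, the same inner-product splitting with Young's inequality and~\eqref{def:lgradlipschitz}, and the same stepsize threshold yielding the $-\frac{\gamma_t}{4K}$ coefficient. No gaps to report.
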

\noindent The proofs lemmas~\ref{lemma:main_local} and~\ref{lemma:main_local-nonconvex} are very similar to those of Lemmas~\ref{lemma:main} and~\ref{lemma:main-nonconvex}. We defer them to the appendix.

Similar as in the previous sections, we will now first derive an upper bound on $\E \norm{\xx_t^k - \tilde \xx_t}^2$ in Lemma~\ref{lemma:local_final} below. The theorem then follows analogously to the proofs of Theorem~\ref{thm:delay} and~\ref{thm:sparsified} and we omit it here.
\begin{lemma} \label{lemma:local_final}
Let $\{\tilde \xx_t\}_{t \geq 0}$, $\{\xx_t^k\}_{t \geq 0}^{k \in [K]}$ be defined as above and stepsizes $\{\gamma_t\}_{t \geq 0}$ with $\gamma_{t} \leq \frac{1}{10 L(\tau K +M)}$, $\forall t \geq 0$ and $\{\gamma_t^2\}_{t \geq 0}$ is $\tau$-slow decaying. Then
\begin{align}
 \frac{1}{K} \sum_{k=1}^K \Eb {3L \norm{\xx_t^k - \tilde \xx_t}}^2 \leq \frac{1}{10 L \tau K} \sum_{k=1}^K \sum_{i=0}^{\tau-1} \E\norm{\nabla f(\xx_{t-i}^k)}^2  + 2\frac{\gamma_t \sigma^2}{K} \,. \label{eq:local_bound1}
\end{align}
Furthermore, for any $\tau$-slow increasing non-negative sequence $\{w_t\}_{t \geq 0}$ it holds:
\begin{align*}
 \frac{1}{K}\sum_{k=1}^K \sum_{t=0}^T \Eb {3L \norm{\xx_t^k - \tilde \xx_t}}^2 \leq \frac{ 1 }{5LK} \sum_{k=1}^K \sum_{t=0}^T w_t \E\norm{\nabla f(\xx_{t-i}^k)}^2 + 2 \frac{ \sigma^2}{K} \sum_{t=0}^T w_t \gamma_t\,.  %
\end{align*}
\end{lemma}
\begin{proof}
We start with the first claim. By definition and Lemma~\ref{lem:independent} from above, we have the bound:
\begin{align*}
 \frac{1}{K} \sum_{k=1}^K \E  \norm{\xx_t^k - \tilde \xx_t}^2 & = \frac{1}{K} \sum_{k=1}^K \E \norm{\xx_t^k - \tilde \xx_{\lfloor t/\tau\rfloor \tau} - (\tilde \xx_t - \tilde \xx_{\lfloor t/\tau\rfloor \tau})}^2 \leq \frac{1}{K} \sum_{k=1}^K  \E \norm{\xx_t^k - \tilde \xx_{\lfloor t/\tau\rfloor \tau}}^2 \\
 &\leq \frac{1}{K}\sum_{k=1}^K \E \norm{\sum_{i=0}^{\tau -1} \gamma_{t-i} \bigl(\nabla f(\xx_{t-i}^k) + \bxi_{t-i}^k\bigr)}^2  \\
 &\stackrel{\eqref{eq:independent}}{\leq} \frac{3\tau}{2K} \sum_{k=1}^K \sum_{i=0}^{\tau-1} \gamma_{t-i}^2 \E \norm{\nabla f(\xx_{t-i}^k)}^2 +  \frac{3}{K} \sum_{k=1}^K \sum_{i=0}^{\tau-1} \gamma_{t-i}^2 \E \norm{\bxi_{t-i}^k}^2 \\
 & \stackrel{\eqref{def:noise-general}}{\leq} \frac{3(\tau + M)}{2K} \sum_{k=1}^K \sum_{i=0}^{\tau-1}  \gamma_{t-i}^2 \E\norm{\nabla f(\xx_{t-i}^k)}^2 + 3\sigma^2 \sum_{i=0}^{\tau-1} \gamma_{t-i}^2\,,
\end{align*}
where we used $\E\norm{X - \E X}^2 \leq \E \norm{X}^2$, for random variable $X$, for the first inequality.
For $i \leq \tau$ we have the upper bound $\gamma_{t-i}^2 \leq \gamma_t^2 \bigl(1+\frac{1}{2\tau}\bigr)^\tau \leq \gamma_t^2 \exp \left[\frac{\tau}{2\tau} \right] \leq 2 \gamma_t^2$, as $1+x \leq e^x$, $\forall x \in \R$. Thus we can simplify:
\begin{align}\label{eq:final-bound}
 \E  \norm{\ee_t}^2 \leq \gamma_t^2 \left( \frac{3(\tau +M)}{K} \sum_{k=1}^K \sum_{i=0}^{\tau-1}  \E\norm{\nabla f(\xx_{t-i}^k)}^2 + 6 \tau \sigma^2 \right)\,.
\end{align}
By observing that the choice $\gamma_t \leq \frac{1}{10 L(\tau K + M)}$ implies  $\bigl(3L \cdot 3(\tau + M) \gamma_t^2\bigr) \leq \frac{1}{10 L(\tau K + M)} \leq \frac{1}{10 L\tau K} \leq \frac{1}{10 L\tau}$ 
and $\bigl(3 L \cdot 6\tau \gamma_t\bigr) \leq \frac{2L\tau}{L(\tau K + M)} \leq \frac{2}{K}$ we show the first claim. 
The second claim follows analogously to the proof in Lemma~\ref{lemma:dsgd_final} from~\eqref{eq:local_bound1}.
This concludes the proof.
\end{proof}
We note that our proof can recover the subsequent result of \cite{Woodworth2019local} by using \eqref{eq:final-bound} directly without the further simplification, and choosing a better step size $\gamma_t$ as in \cite{Woodworth2019local}.
\section{Conclusion}

We leverage the error-feedback framework to analyze the effect of different forms of delayed updated in a unified manner. We prove that the effects of such delays is negligible for SGD in the presence of noise. This finding comes as no surprise, as it agrees with previous results for SGD with delayed updates or with gradient compression~\cite{Chaturapruek2015:noise,Arjevani2018:delayed,
Stich2018:sparsified,Karimireddy2019:error}. We improve on these previous work by providing a tighter non-asymptotic convergence analysis in a more general setting. While our analysis matches with known lower bounds in some settings, in others (such as the local SGD) still leaves a gap. A further limitation of the analysis is that in its current form it is restricted to only unconstrained objectives. Overcoming these limitations, as well as incorporating acceleration, are fruitful avenues for future research. 
Here. we studied three forms of delays in well prescribed theoretical forms. Similar results can be derived for asynchronous methods with atomic updates under more general conditions (i.e.\ variable bounded, instead of fixed delays, block-coordinate updates, etc.), as well as a combination of the three delays studied here (e.g.\ local SGD with compressed communication). These results could be worked out in future work if there is concrete need dictated by practice.

\section*{Acknowledgments}
We thank %
Martin Jaggi for comments and pointing out related notions of quasi-convexity, and
Thijs Vogels for his comments on this manuscript.
We acknowledge funding from SNSF grant 200021\_175796, as well as a Google Focused Research Award.

\newpage

\appendix

\section{Deferred Proofs}
\subsection{Proof of Lemma~\ref{lemma:main_local}}

\begin{proof}[Proof of Lemma~\ref{lemma:main_local}]
We expand:
\begin{align*}
  \norm{\tilde \xx_{t+1} - \xx^\star}^2  &\stackrel{\eqref{def:bar}}{=} \norm{\tilde \xx_{t} - \xx^\star}^2 - \frac{2\gamma_t}{K} \sum_{k=1}^K \lin{\gg_t^k, \xx_t^k-\xx^\star} + \frac{\gamma_t^2}{K^2} \norm{ \sum_{k=1}^K \gg_t^k }^2 + \frac{2\gamma_t}{K} \sum_{k=1}^K \lin{\gg_t^k, \tilde \xx_t - \xx_t^k}\,,
\end{align*}
By using independence,
\begin{align*}
 \EE{\bxi_t^1,\dots,\bxi_t^k} \norm{ \sum_{k=1}^K \gg_t^k }^2 =  \norm{\sum_{k=1}^K \nabla f(\xx_t^k)}^2 + \sum_{k=1}^K \E \norm{\bxi_t^k}^2 \stackrel{\eqref{def:noise}}{\leq} 2L(K+M) \sum_{k=1}^K (f(\xx_t^k)-f^\star) + K\sigma^2 \,.
\end{align*}
Thus we can take expectation above:
\begin{align}\begin{split}
  \Eb{\norm{\tilde \xx_{t+1} - \xx^\star}^2 \mid \tilde \xx_t }
  &\stackrel{(\ref{eq:smoothbound})}{\leq} \norm{\tilde \xx_{t+1} - \xx^\star}^2 - \frac{2\gamma_t}{K} \sum_{k=1}^K \lin{\nabla f(\xx_t^k), \xx_t^k-\xx^\star} \\ &\qquad + \frac{2L(K+M)\gamma_t^2}{K^2} \sum_{k=1}^K (f(\xx_t^k)-f^\star)  \\ &\qquad + \frac{\gamma_t^2 \sigma^2}{K}  + \frac{ 2\gamma_t }{K} \sum_{k=1}^K \lin{\nabla f(\xx_t^k), \tilde \xx_t^k - \xx_t}\,. \label{eq:long1_1}
  \end{split}
\end{align} 
By Assumption~\ref{ass:strong}:
\begin{align*}
  -2\lin{\nabla f(\xx_t^k), \xx_t^k-\xx^\star} \stackrel{\eqref{def:strong}}{\leq} - \mu \norm{\xx_t^k- \xx^\star}^2  - 2(f(\xx_t^k)-f^\star)\,,
\end{align*}
and by $2\lin{\aa,\bb} \leq \alpha \norm{\aa}^2 + \alpha^{-1} \norm{\bb}^2$ for $\alpha > 0$, $\aa,\bb \in \R^d$,
\begin{align*}
2\lin{\nabla f(\xx_t^k), \tilde \xx_t - \xx_t^k} \leq \frac{1}{2L} \norm{\nabla f(\xx_t^k)}^2 + 2L\norm{\xx_t^k -\tilde \xx_t}^2 \stackrel{\eqref{def:lsmooth}}{\leq} f(\xx_t^k)-f^\star + 2L \norm{\xx_t^k -\tilde \xx_t}^2 \,. 
\end{align*}
And by $\norm{\aa + \bb}^2 \leq (1+\beta)\norm{\aa}^2 + (1+\beta^{-1}) \norm{\bb}^2$ for $\beta > 0$ (as a consequence of Jensen's inequality), we further observe
\begin{align*}
 -\norm{\xx_t^k - \xx^\star}^2 \leq - \frac{1}{2} \norm{\tilde \xx_t - \xx^\star}^2 + \norm{\xx_t^k - \tilde \xx_t}^2\,.
\end{align*}
Plugging all these inequalities together into~\eqref{eq:long1_1} yields
\begin{align*}
  \Eb{\norm{\tilde \xx_{t+1} - \xx^\star}^2 \mid \tilde \xx_t }
  &\leq \left(1-\frac{\mu \gamma_t}{2}\right) \norm{\tilde \xx_{t+1} - \xx^\star}^2 - \frac{\gamma_t (K - 2L(K+M)\gamma_t)}{K^2} \sum_{k=1}^K (f(\xx_t^k)- f^\star) \\ &\qquad + \frac{ \gamma_t^2 \sigma^2}{K} + \frac{\gamma_t (2L+\mu)}{K} \sum_{k=1}^K \norm{\xx_t^k - \tilde \xx_t}^2 \,. 
\end{align*}
The claim follows by the choice $\gamma_t \leq \frac{K}{4L(K+M)}$ and $L \geq \mu$.
\end{proof}

\subsection{Proof of Lemma~\ref{lemma:main_local-nonconvex}}

\begin{proof}[Proof of Lemma~\ref{lemma:main_local-nonconvex}]
We begin using the definition of $\tilde \xx_{t+1}$ and the smoothness of $f$
\begin{align*}
    f(\tilde \xx_{t+1}) \stackrel{\eqref{def:bar}}{\leq} f(\tilde \xx_t) - \frac{\gamma_t}{K} \sum_{k=1}^K \lin{\nabla f(\tilde \xx_t), \gg_t^k} + \frac{\gamma_t^2 L}{2K^2}\norm{\sum_{k=1}^K\gg_t^k}^2\,.
\end{align*}
With Assumption~\ref{ass:noise} on the noise and 
using independence
we have
\[
     \EE{\bxi_t^1,\dots,\bxi_t^k} \norm{ \sum_{k=1}^K \gg_t^k }^2 \stackrel{\eqref{def:noise-general}}{=}  \norm{\sum_{k=1}^K \nabla f(\xx_t^k)}^2 + \sum_{k=1}^K \E \norm{\bxi_t^k}^2 \stackrel{\eqref{def:lsmooth},\eqref{def:noise-general}}{\leq} (K+M) \sum_{k=1}^K \norm{\nabla f(\xx_t^k)}^2 + K\sigma^2\,.
\]
Thus we proceed by taking expectation on both sides as follows
\begin{align*}
    \EE{\bxi_t^1,\dots,\bxi_t^k}{f(\tilde \xx_{t+1}) | \xx_t} &\leq f(\tilde \xx_t) - \frac{\gamma_t}{K} \sum_{k=1}^K \lin{\nabla f(\tilde \xx_t), \nabla f(\xx_t^k)} + \frac{\gamma_t^2 L}{2K^2}\EE{\bxi_t^1,\dots,\bxi_t^k} \norm{\sum_{k=1}^K\gg_t^k}^2\\
    &\leq f(\tilde \xx_t) - \frac{\gamma_t}{K} \sum_{k=1}^K \lin{\nabla f(\tilde \xx_t), \nabla f(\xx_t^k)} + \frac{\gamma_t^2 L(K+M)}{2K^2}\sum_{k=1}^K \norm{\nabla f(\xx_t^k)}^2 + \frac{L \gamma_t^2 \sigma^2}{2K}\\
    &= f(\tilde \xx_t)  -\left(\frac{\gamma_t}{K} - \frac{\gamma_t^2 L(K+M)}{2K^2} \right)\sum_{k=1}^K \norm{\nabla f(\xx_t^k)}^2  \\ &\qquad + \frac{\gamma_t}{K} \sum_{k=1}^K \lin{\nabla f(\xx_t^k) - \nabla f(\tilde \xx_t), \nabla f(\xx_t^k)} + \frac{L \gamma_t^2 \sigma^2}{2K}\,.
\end{align*}
Note that Cauchy-Schwarz and Jensen inequalities together give  $\lin{\aa, \bb} \leq \tfrac{\beta}{2}\norm{\aa}^2 + \tfrac{1}{2\beta}\norm{\bb}^2$ for any $\beta > 0$. Using this observation with $\beta = 1$ we can proceed as
\begin{align*}
    \sum_{k=1}^K \lin{\nabla f(\xx_t^k) - \nabla f(\tilde \xx_t), \nabla f(\xx_t^k)} &\leq \sum_{k=1}^K\frac{1}{2}\norm{\nabla f(\xx_t^k) - \nabla f(\tilde \xx_t)}^2 + \sum_{k=1}^K\frac{1}{2}\norm{\nabla f(\xx_t^k)}^2\\
    &\stackrel{\eqref{def:lgradlipschitz}}{\leq}\frac{L^2}{2}\sum_{k=1}^K\norm{\xx_t^k - \tilde \xx_t}^2 + \frac{1}{2}\sum_{k=1}^K\norm{\nabla f(\xx_t^k)}^2\,.
\end{align*}
Plugging this back, we get our result that
\[
\EEb{\bxi_t}{f(\tilde \xx_{t+1}) | \xx_t} \leq f(\tilde \xx_t) - \gamma_t\left(\frac{1}{2K} - \frac{\gamma_t L(K+M)}{2K^2}\right)\sum_{k=1}^K\norm{\nabla f(\xx_t^k)}^2 + \frac{\gamma_t^2 L \sigma^2}{2K}  + \frac{\gamma_t L^2}{2 K}\sum_{k=1}^K\norm{\xx_t^k - \tilde \xx_t}^2\,.
\]
Noting that $\gamma_t \leq \frac{K}{2 L(K+M)}$ implies $\gamma_t\left(\frac{1}{2K} - \frac{\gamma_t L(K+M)}{2K^2}\right) \leq \frac{\gamma_t}{4K}$ yields the lemma.
\end{proof}

\vskip 0.2in
\bibliography{delays}

\end{document}